\newtheorem{theorem}{Theorem}
\newtheorem{lemma}{Lemma}
\newtheorem{conjecture}{Conjecture}
\theoremstyle{remark}
\newtheorem*{remark}{Remark}
\begin{document}
%

\title{MASS: Mobility-Aware Sensor Scheduling of Cooperative Perception for Connected \\Automated Driving}

%
%

\author{Yukuan~Jia,
        Ruiqing~Mao,
        Yuxuan~Sun,~\IEEEmembership{Member,~IEEE,}
        Sheng~Zhou,~\IEEEmembership{Member,~IEEE,}
        and~Zhisheng~Niu,~\IEEEmembership{Fellow,~IEEE}
\thanks{Yukuan Jia, Ruiqing Mao, Sheng Zhou, and Zhisheng~Niu are with Beijing National Research Center for Information Science and Technology, Department of Electronic Engineering, Tsinghua University, China. Emails: \{jyk20, mrq20\}@mails.tsinghua.edu.cn, \{sheng.zhou, niuzhs\}@tsinghua.edu.cn.}%
\thanks{Yuxuan~Sun is with School of Electronic and Information Engineering, Beijing Jiaotong University, China. Email: yxsun@bjtu.edu.cn}
\thanks{Part of this work has been published in IEEE ICC 2022 \cite{icc}.}
}
\maketitle

\begin{abstract}
Timely and reliable environment perception is fundamental to safe and efficient automated driving. 
However, the perception of standalone intelligence inevitably suffers from occlusions.
A new paradigm, Cooperative Perception (CP), comes to the rescue by sharing sensor data from another perspective, i.e., from a cooperative vehicle (CoV).
Due to the limited communication bandwidth, it is essential to schedule the most beneficial CoV, considering both the viewpoints and communication quality.
Existing methods rely on the exchange of meta-information, such as visibility maps, to predict the perception gains from nearby vehicles, which induces extra communication and processing overhead.
In this paper, we propose a new approach, learning while scheduling, for distributed scheduling of CP.
The solution enables CoVs to predict the perception gains using past observations, leveraging the temporal continuity of perception gains.
Specifically, we design a mobility-aware sensor scheduling (MASS) algorithm based on the restless multi-armed bandit (RMAB) theory to maximize the expected average perception gain.
An upper bound on the expected average learning regret is proved, which matches the lower bound of any online algorithm up to a logarithmic factor.
Extensive simulations are carried out on realistic traffic traces. 
The results show that the proposed MASS algorithm achieves the best average perception gain and improves recall by up to 4.2 percentage points compared to other learning-based algorithms.
Finally, a case study on a trace of LiDAR frames qualitatively demonstrates the superiority of adaptive exploration, the key element of the MASS algorithm.
\end{abstract}

\begin{IEEEkeywords}
Cooperative perception, mobility-aware, sensor scheduling, restless multi-armed bandit.
\end{IEEEkeywords}

%
\IEEEpeerreviewmaketitle

\section{Introduction}
Automated driving (AD) has received fast-growing attentions in recent years.
Among the enabling technologies of AD, reliable and timely perception is the basis for safety and energy efficiency.
Much effort has been made to improve the object detector, using state-of-the-art neural networks and multi-modality sensors \cite{bevformer,mv3d,survey-3ddetection,survey-multimodal}. 
However, the standalone perception has an intrinsic flaw because it can only provide line-of-sight (LoS) information with onboard sensors, and thus the traffic participants occluded in the blind zone cannot be detected. 
Thanks to the Vehicle-to-Everything (V2X) communication technology \cite{v2x}, vehicles can communicate with each other and a wide range of V2X applications are made possible, including \emph{cooperative perception} (CP).
By exchanging sensor data with other \emph{cooperative vehicles (CoVs)}, the occluded objects can be detected, and the perception quality is essentially improved.

There are three levels of CP, namely raw-level, feature-level, and object-level, differentiated by the type of the shared sensor data.
In its primitive form, object-level CP, only the list of detected objects is broadcast to other CoVs.
The messages are relatively lightweight, but the loss of details in sensor data causes difficulties in merging noisy, discrepant results from multiple sources \cite{fusioneye}.
In the raw-level CP \cite{cooper,autocast}, raw sensor data such as LiDAR point clouds and images are transmitted to other CoVs, which preserves complete context information.
However, the data volume to be transmitted in the raw-level CP is extremely high.
Feature-level CP \cite{f-cooper,v2vnet} strikes a balance between the above two in terms of the communication load, via extracting key features using neural networks.
As reported in Refs. \cite{cooper,tits-multi-schemes}, there are hard objects unrecognizable from any viewpoint alone but can be identified only when the raw data are aggregated. 
This implies the significance of context information and calls for an advanced CP system framework where more comprehensive sensor data, in raw-level or feature-level, are shared among CoVs.


Current research on CP has been focusing on designing novel fusion architectures \cite{v2vnet, where2com}, with limited attention paid to practical challenges such as the scarce V2X communication bandwidth and the high mobility of vehicles. 
For Cellular-V2X technology, the allocated bandwidth is 20MHz in China and 30MHz in US \cite{5gaa}. 
However, the real-time streaming of high-definition video or LiDAR point clouds takes several megabytes per second for one single link, which is not scalable subject to the total bandwidth constraint.
It is hardly feasible for the V2X network to support raw-level or feature-level sensor data broadcast by all CoVs.
Therefore, the scheduling of sensors in unicast CP, i.e., \emph{whom to cooperate with}, is a challenging and important problem yet to solve.

High mobility is another under-addressed issue in the literature of CP.
When multiple CoVs are available, it is challenging to accurately determine the perception gain of a viewpoint due to unclear occlusion relationships, heterogeneous sensor qualities, as well as the black-box nature of neural networks.
The movements of CoV also lead to a dynamic candidate CoV set over time-varying network topologies.
Many architectures rely on extra metadata messages to gather clues about the perception gain of additional sensors.
For example, the future trajectories \cite{autocast} or confidence maps \cite{where2com} are exchanged for spatial reasoning and perception gain prediction.
However, the sensor viewpoints and wireless channels are time-varying, making the prediction stale very quickly.

These challenges motivate our solution, \emph{learning while scheduling}, in the framework of \emph{online learning}.
Specifically, we harness the mobility and make scheduling decisions based on the historical perception gains from other CoVs' sensor data, with notably less scheduling overhead compared to existing methods.
This leads to a Multi-Armed Bandit (MAB) typed problem, which requires exploring to learn about the environment and simultaneously exploiting the empirically optimal action.
Its basic form, with stationary reward distribution, is well solved by the Upper Confidence Bound (UCB) algorithm with performance guarantee \cite{ucb}.
The UCB algorithm has been applied in a wide range of areas, including edge computing \cite{offload} and mobility management \cite{emm} in wireless networks.
Our problem, on the other hand, falls within the domain of Restless Multi-Armed Bandit (RMAB) \cite{rmab}, where the rewards are constantly evolving due to mobility.

To deal with the RMAB problem, discounted UCB and sliding-window UCB \cite{sw-ucb} are adapted from the classic UCB algorithm by introducing forgetting mechanisms.
An activation-based policy is proposed for Brownian restless bandit \cite{restless}, which leverages statistical assumptions of rewards.
Moreover, the Exp3 \cite{exp3} algorithm combats adversarial rewards that can change arbitrarily.
However, none of the existing methods is engaged with an \emph{ever-changing} set of CoV candidates, as is the case when CoVs have different destinations.
Our main contributions are summarized as follows:
\begin{enumerate}
    \item We propose a \emph{learning while scheduling} framework based on the RMAB theory for the distributed scheduling of \emph{decentralized CP}. 
    Our framework enjoys the advantage of negligible communication and processing overhead, compared to existing solutions that require frequent exchange of meta-information.
    \item A novel mobility-aware sensor scheduling (MASS) algorithm is proposed to leverage the dynamics in perception gains due to vehicular mobility.
    It is also proved that MASS effectively balances exploration and exploitation with a dynamic set of CoV candidates, where the learning regret matches the lower bound of any online algorithm up to a logarithmic factor.
    \item Extensive simulations along with supporting empirical studies are carried out, showing that the proposed MASS algorithm outperforms other online scheduling policies.
    A case study on a trace of LiDAR frames is also provided to qualitatively illustrate the benefit of adaptive exploration, which serves as the key element of MASS.
\end{enumerate}

The rest of this paper proceeds as follows.
We first brief on the related works in Section \ref{sect:related-work}.
Then the system model and problem formulation are introduced in Section \ref{sect:system-model}. 
In Section \ref{sect:algorithm}, the MASS algorithm is proposed, followed by the performance analysis conducted in Section \ref{sect:performance}. 
The experiment results are presented in Section \ref{sect:experiments}, and the paper is concluded in Section \ref{sect:conclusion}.






\section{Related Work} \label{sect:related-work}
\subsection{Dataflow of CP}
There is a wide range of CP systems designed with different architectures.
In this subsection, we divide them into three categories based on the dataflow of sensor data, namely broadcast, centralized, and decentralized.

\textbf{Broadcast}. 
It is most straightforward to share one's sensor data by broadcasting to its neighbor CoVs.
The broadcast of object-level data has been standardized as collective perception message (CPM) \cite{cpm} by ETSI, but suffers from poor scalability due to severe wireless interference and processing burden in the current C-V2X network \cite{cpm-genrules}.
Other new designs require the broadcast of bird eye's view (BEV) features, and the method of data fusion from multiple sources has been a research focus.
For example, DiscoNet \cite{disconet} is a teacher-student framework to learn the pose-aware, attention-based merging via knowledge distillation, where raw-level CP guides the feature-level CP.
A vision transformer-based architecture is proposed to capture the inter-agent relationships in V2X-ViT \cite{v2x-vit}.
V2VNet \cite{v2vnet} leverages graph neural networks (GNN) to achieve a multi-round aggregation of compressed feature data among nearby CoVs.
However, the volume of feature data still poses a major challenge for the V2X network, particularly subject to the limited data rate of broadcast transmission.

\textbf{Centralized}.
Road side units (RSUs) are deployed in the intelligent transportation system to provide a holistic, superior viewpoint from above, especially at urban intersections.
Therefore, the RSU can broadcast its detection results to nearby CoVs.
Following this idea, VIPS \cite{vips} specializes in object fusion based on efficient matching of graph structures, handling the time asynchrony and localization error.
On the other hand, the RSU can also serve as a fusion center to aggregate CPMs from nearby CoVs and broadcast the merged results \cite{citemy}.
Moreover, compressed raw point clouds and features can also be transmitted to the edge server to perform data fusion and detection, as in EMP \cite{emp} and VINet \cite{vinet}.
Although RSU is very effective in assisting perception, the deployment is costly and it is impossible to cover everywhere.

\textbf{Decentralized}.
Without a fusion center, the sensor data are exchanged in a decentralized, \emph{on-demand}, and \emph{unicast} manner among CoVs, which is more bandwidth-efficient than broadcast.
Decentralized CP can be realized in two different ways. 
In one way, a global scheduler determines the communication topology among CoVs based on the estimated rewards provided by the CoVs \cite{autocast}.
The transmission decision could also be determined independently by the CoVs themselves with locally available information \cite{where2com}.
In the following subsection, we review some recent works on the scheduling of decentralized CP.

\subsection{Scheduling of Decentralized CP}
Under network bandwidth constraints, the sensor scheduling problem of CP arises naturally.
AutoCast \cite{autocast} predicts the visibility and relevance of detected objects to other CoVs, using meta-information including their future trajectories.
Then a global scheduler allocates the communication bandwidth using a greedy max-weight scheduler.
Ref. \cite{action-branch} uses reinforcement learning techniques to address the CoV association problem by RSU, based on the CoVs' interest in locations.
Similar to the centralized architecture, these methods also rely on RSU to coordinate the transmissions.
Besides, Ref. \cite{platoon-cp} groups vehicles into a cooperative platoon, incorporates task offloading into CP, and solves the joint optimization problem.
Recently, the paradigm of multi-agent collaboration has been applied.
For example, When2com \cite{when2com} applies a three-stage handshake mechanism between any two CoVs to decide whether the cooperation is necessary.
Request maps are exchanged in Where2com \cite{where2com}, which then utilizes spatial confidence to determine the communication graph.

To summarize, the scheduling of decentralized CP depends on spatial reasoning or attention mechanism, both of which require extra communication and computation.
In fact, timeliness is one of the most crucial factor for environmental perception in automated vehicles. 
Excessive cooperation results in network latency, leading to severe performance drop due to positional drifts \cite{dair-v2x}.
In our previous work \cite{icc}, an algorithm is proposed to gradually learn the perception gain under the quasi-stationary assumption, which enjoys negligible scheduling overhead.
While preserving the real-time advantage, this paper deals with the dynamics of the perception gains due to vehicular mobility, which is more practical.

\section{System Model and Problem Formulation} \label{sect:system-model}
\subsection{System Overview}
We consider a connected automated driving scenario where a number of CoVs are connected via the V2V network.
For safe and efficient self-driving, perception with timeliness and high reliability is crucial to vehicles.
In this system, CP is supplementary to the standalone perception.
As the basis, standalone perception is carried out with only onboard sensors and performs object detection at a high frequency to ensure timeliness. 
On the other hand, CP merges the sensor data from multiple sources with the local data to deal with blind areas and long-tail cases.
A CoV can simultaneously request sensor data from others and transmit its sensor data to others on demand.

Typically, the CoVs periodically broadcasts short beacon messages such as Cooperative Awareness Messages (CAMs) \cite{cam} that include vehicle states and other optional information.
To enable CP, a bit indicating sensor sharing functionality and a list of indicators for available data formats are added to the beacon message, thereby other CoVs can request sensor data in a compatible data format.
The sensor data formats can be raw images, raw point clouds, or intermediate features extracted by neural networks.
To save the communication bandwidth, we assume that the CoVs can receive sensor data from only one of the peers at a time.
Based on the received beacons, each CoV sends a sensor request to one of the other CoVs for additional sensor data to augment its perception.
Furthermore, the amount of communicated sensor data is subject to the available data rate of wireless links.

\begin{figure}[!t]
	\centering
	\includegraphics[width=0.45\textwidth]{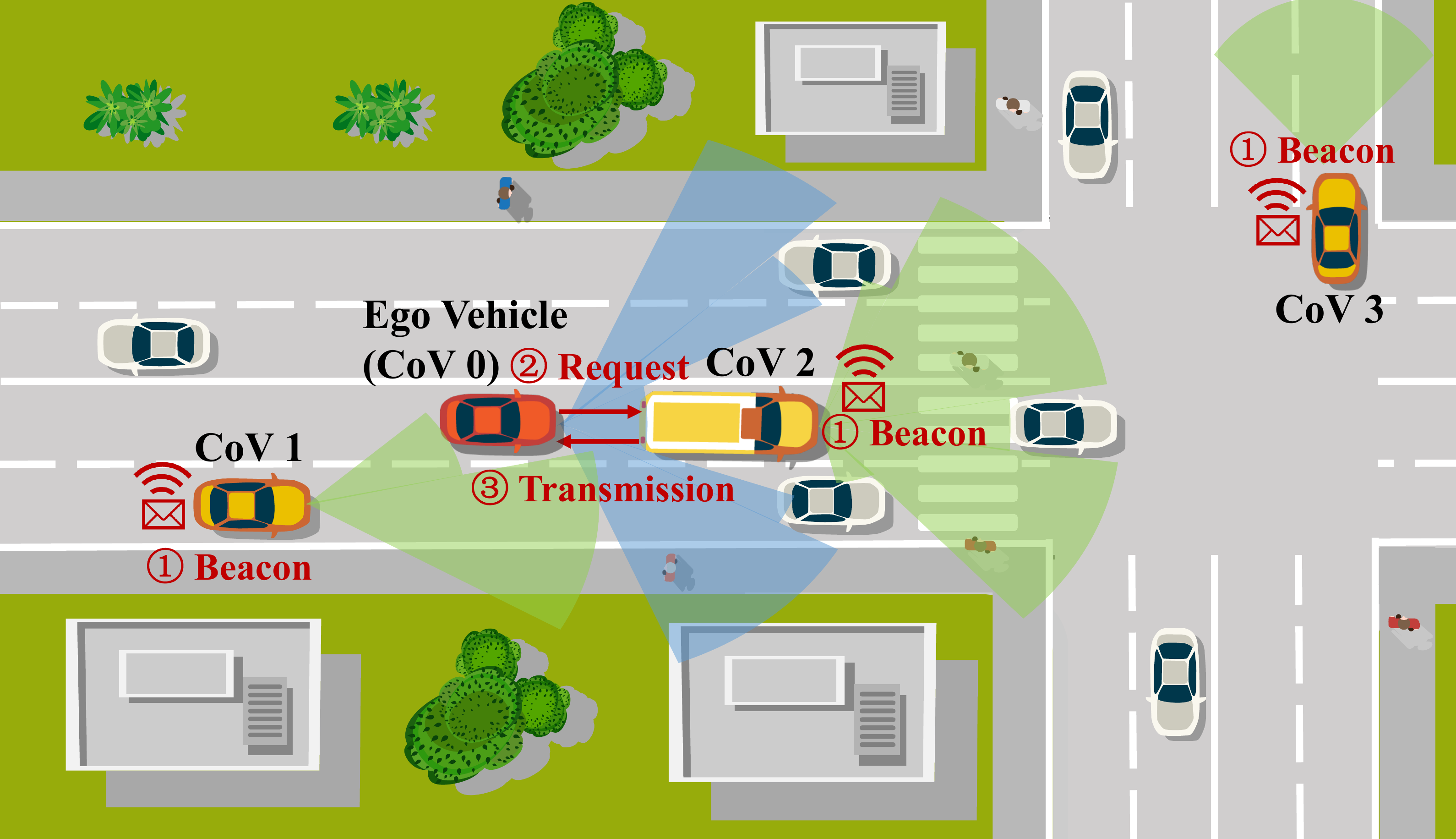}
	\caption{Illustration of the sensor scheduling procedure in CP.}
	\label{selection-idea}
\end{figure}


In this work, we investigate the distributed sensor scheduling problem in CP, where each CoV independently schedules another CoV to transmit its sensor data, without the latency induced by coordination.
Therefore, we focus on one particular CoV and refer to it as the \emph{ego vehicle} below.
As illustrated in Fig. \ref{selection-idea}, the ego vehicle (CoV 0) is driving on the street and aims to detect the surrounding objects using a CP framework.
It receives beacon messages from three other CoVs, i.e., CoV 1 to 3, that can offer sensor data from different views and identify them as candidates.
The ego vehicle then decides to schedule CoV 2 because it has a richer supplementary view and a LoS communication link.
Finally, the transmitted sensor data are merged with the onboard sensor data and fed into the detector.
The optimality of sensor sharer is determined by many factors, such as occlusion relationships, wireless link states, sensor qualities and configurations.
Therefore, it is challenging to optimize scheduling decisions.

\subsection{Procedure of CP}
We consider a discrete-time system, with the length of a time slot $\Delta t$ equal to the period of CP.

\textbf{Candidate Discovery:}
At the beginning of time slot $t$, the ego vehicle identifies the nearby CoVs that can offer compatible formats of sensor data based on received beacon messages.
The ego vehicle then determines the set of candidate sharers $\mathcal{V}_t$ by filtering the CoVs outside a specific range to restrict the total number of candidates under $V_\mathrm{max}$. 
The total bandwidth of V2X communications is denoted by $W$.
Depending on the network status and the use of other V2X applications, the ratio of available communication resources $\eta_i(t)$ is typically time-varying and different across CoVs.
According to Shannon's formula, the data transmission rate from CoV $i$ to the ego vehicle at time slot $t$ is expressed as
\begin{align}
    r_i(t) = W \eta_i(t) \log_2\left(1+\frac{P h_i(t)}{\sigma_n^2}\right),
\end{align}
where $P$ is the transmit power, $h_i(t)$ denotes the channel gain between CoV $i$ and the ego vehicle, and $\sigma_n^2$ is the noise power.

\textbf{Standalone Perception:}
At time slot $t$, the set of interested objects in the proximity of the ego vehicle is denoted by $\mathcal{O}_t$.
With onboard sensors, the ego vehicle captures a frame of data $\mathcal{X}_0^{(t)}$, which can be a raw image or a frame of LiDAR point clouds.
The part of the data that relate to object $j\in\mathcal{O}_t$ is denoted by $\mathcal{X}_{0,j}^{(t)} \subset \mathcal{X}_0^{(t)}$.
Then the sensor data are fed into an object detector to obtain the detection results, i.e., $\Phi \left(\mathcal{X}_{0,j}^{(t)}\right)\in \{0,1\}$.
If $\mathcal{X}_{0,j}^{(t)}$ contains enough information of object $j$ for accurate classification and localization, the detector can correctly detect the object $j$, i.e., $\Phi\left(\mathcal{X}_{0,j}^{(t)}\right)=1$, otherwise $\Phi\left(\mathcal{X}_{0,j}^{(t)}\right)=0$.
Particularly, if an interested object $j$ is completely occluded, then $\mathcal{X}_{0,j}^{(t)}=\emptyset$ and $\Phi\left(\mathcal{X}_{0,j}^{(t)}\right)=0$.

\textbf{Cooperative Perception:}
With CP, the ego vehicle schedules a candidate CoV $i\in\mathcal{V}_t$ and requests for its sensor data that could be raw sensor data or features extracted from neural networks.
This additional sensor data can contain finer textures of objects, provide a better perspective or even reveal invisible objects from the ego view.
Assume the CoVs are well-synchronized with perfect pose information, then it is very likely that the shared data could help identify missed detections.
To guarantee the timeliness of perception results, the scheduled CoV has to compress the sensor data subject to the transmission rate, otherwise the high latency would compromise the benefit of CP.
For example, the raw sensor data are uniformly down-sampled to a lower resolution, while fewer channels of the feature data are transmitted.
More efficient compression algorithms are beyond the scope of this paper.
Before transmission, the original sensor data $\mathcal{X}_i^{(t)}$ are compressed to $\Tilde{\mathcal{X}}_i^{(t)}$, satisfying
\begin{align}
    D\left(\Tilde{\mathcal{X}}_i^{(t)}\right) \le r_i(t) \Delta t,
    \label{formula:rate}
\end{align}
where $D\left(\cdot\right)$ denotes the data size in bits.
Finally, the detection result of object $j$ is $\Phi\left(\mathcal{X}_{0,j}^{(t)} \cup \Tilde{\mathcal{X}}_{i,j}^{(t)}\right) \in \{0,1\}$ for CP with CoV $i$, where $\Tilde{\mathcal{X}}_{i,j}^{(t)}$ denotes the part of compressed sensor data that relate to object $j$.

\textbf{Perception Gain Evaluation:} 
One of the usual metrics for object detection is \emph{recall}, defined as the number of detected objects divided by the total number of objects.
In the automated driving scenario, objects can have different importance to the ego vehicle, and thus we consider the sum of importance weight for missed objects in the metric.
Specifically, the perception cost of standalone intelligence is given by
\begin{align}
    c_0(t) = \sum_{j\in\mathcal{O}_t} w_j^{(t)} \left(1-\Phi \left(\mathcal{X}_{0,j}^{(t)}\right)\right),
\end{align}
where $w_j^{(t)}$ is the importance weight of object $j$ at time slot $t$.
The importance weights relate to the distance or the future trajectory of the ego vehicle, and it is up to the implementation.
As a special case, minimizing the perception cost is equivalent to maximizing the recall when the importance weights are equal.
With the shared data from the CoV $i$, the ego vehicle performs CP to augment the standalone perception, and the cost is
\begin{align}
    c_i(t) = \sum_{j\in\mathcal{O}_t} w_j^{(t)} \left(1-\Phi \left(\mathcal{X}_{0,j}^{(t)}\right)\right) \left(1-\Phi\left(\mathcal{X}_{0,j}^{(t)} \cup \Tilde{\mathcal{X}}_{i,j}^{(t)}\right)\right).
\end{align}
The cost of cooperative perception is determined by not only the viewpoint but also the transmission rate that decides the quality of shared data.
A major challenge in the evaluation of perception quality is that without the knowledge of ground truths $\mathcal{O}_t$, the actual perception costs $c_0(t)$ and $c_i(t)$ are unknown to the ego vehicle.
Nevertheless, the perception gain from CoV $i$ can be calculated by 
\begin{align} \label{formula:gain}
    g_i(t) &= c_0(t)-c_i(t) \nonumber \\
    &= \sum_{j\in\mathcal{O}_t} w_j^{(t)} \left(1 - \Phi\left(\mathcal{X}_{0,j}^{(t)}\right)\right) \Phi\left(\mathcal{X}_{0,j}^{(t)} \cup \Tilde{\mathcal{X}}_{i,j}^{(t)}\right).
\end{align}
In other words, the gain is the sum of the weighted costs for newly detected objects.
However, this perception gain from the scheduled CoV is available only when the detection is finished at the end of a time slot.

\subsection{Problem Formulation}
Consider the ego vehicle driving on a trip over $T$ time slots.
During the trip, other traffic participants travel alongside the ego vehicle and form highly dynamic occlusion relationships and volatile wireless links.
Among them, the CoVs may come close and leave at certain times during the trip.
Consequently, the set of candidate CoVs $\mathcal{V}_t$ and their perception gain $g_i(t)$ are constantly evolving.
The objective is to maximize the total perception gain of cooperative perception by optimizing the scheduling decisions of the ego vehicle. 
The sensor scheduling problem is formulated as
\begin{align} \label{formula:problem}
    \max_{a_1,\dots,a_T} &\quad \frac{1}{T} \sum_{t=1}^T g_{a_t}(t), \\
    \mathrm{ s.t. } &\quad a_t \in \mathcal{V}_t,
\end{align}
where $a_t$ is the optimization variable, representing the index of the scheduled CoV in time slot $t$.

Note that there are some subtleties in this problem.
Firstly, since there is no complete information of everything on the road, it is usually hard to directly estimate the gain $g_i(t)$ due to unpredictable occlusions.
Moreover, only the perception gain of scheduled CoV, i.e., $g_{a_t}(t)$, is available at the end of time slot $t$, while the other unscheduled CoVs are not observed.
Therefore, the offline optimal solution to (\ref{formula:problem}), i.e.,
\begin{align}
    a^*_t = \arg\max_{a_t\in\mathcal{V}_t} g_{a_t}(t),
\end{align}
is not feasible in practice.

Nevertheless, we seek to leverage the \emph{temporal continuity} of $g_i(t)$ and propose to learn the gains online from historical observations $g_{a_1}(1)$, $g_{a_2}(2),\dots,g_{a_{t-1}}(t-1)$.
Specifically, due to the velocity limit, the relative positions among CoVs and their perspectives cannot drift too much from time slot to time slot.
This type of problem falls in the category of the restless multi-armed bandits (RMAB) \cite{rmab}, where the CoVs correspond to the arms.
The change of $g_i(t)$ from the last observation requires the algorithm to schedule every candidate CoV once in a while to gain knowledge of its latest perception gain.
This process is termed \emph{exploration}.
On the other hand, the empirically optimal CoV should be scheduled frequently to \emph{exploit} the knowledge.
The target is to learn and schedule the optimal CoV $a^*_t$ while balancing exploration and exploitation.

\section{MASS: Mobility-Aware Sensor Scheduling Algorithm} \label{sect:algorithm}
In this section, we develop an online learning-based algorithm to tackle the decentralized sensor scheduling problem in CP. 
Compared with existing solutions, our algorithm considers the dynamics of perception gains due to vehicular mobility and has negligible communication and computation overhead.

It is well known that the Upper Confidence Bound (UCB) algorithm \cite{ucb} is an optimal solution to stationary multi-armed bandits problems, where the rewards of arms have stationary distributions.
The UCB algorithm gradually eliminates the randomness in the rewards and schedules the most promising arm in each time slot.
On the contrary, our problem deals with \emph{dynamics} rather than \emph{randomness}.
The intuition is that, rather than shrinking the confidence bound with more trials, we should enlarge the confidence bound for idle CoVs over time in this problem.

We propose a low-complexity yet effective algorithm named MASS, as described in Algorithm \ref{alg:mass}, to schedule CoVs to share their sensor data.
This algorithm maintains confidence bounds of perception gains for all candidate CoVs and schedules the CoV with the maximum upper confidence bound.
The confidence bounds are computed based on the historical observations and properties of the assumed underlying process $g_i(t)$. 
Since the variation of perception gains is mainly influenced by frequent blockages due to mobility, without loss of generality, we approximate the increment of $g_i(t)$ by independent, normally distributed random variables.
Therefore, the confidence bound of a nearby CoV is proportional to the square root of its idle time.
The scale of confidence bounds is specified by a parameter $\beta$ that depends on the rate of change in $g_i(t)$ and the confidence level.
When $\beta$ takes a larger value, the algorithm is more aggressive in exploration.

\begin{figure}[!t]
    \renewcommand{\algorithmicrequire}{\textbf{Parameter:}}
    \begin{algorithm}[H]
    \begin{algorithmic}[1]
    	\REQUIRE $\beta$ 
    	\FOR {$t=1,\cdots,T$}
    	    \STATE Determine the candidate set of available CoVs $\mathcal{V}_t$.
    	    \IF {any CoV $i\in\mathcal{V}_t$ has not been scheduled}
                \STATE Schedule CoV $i$, i.e., $a_t=i$.
    	    \ELSE
    	        \STATE Calculate the upper confidence bound for each CoV $i\in\mathcal{V}_t$: 
    	        \begin{align}
    	            \Tilde{g}_i(t) = \hat{g}_i + \beta \sqrt{t-\tau_i}.
    	        \end{align}
    	        \STATE Schedule the CoV $a_t$ with the maximum upper confidence bound, ties broken arbitrarily: 
    	        \begin{align}
    	             a_t = \arg\max_{i\in\mathcal{V}_t} \Tilde{g}_i(t).
    	        \end{align}
    	    \ENDIF
                \STATE Send a request to CoV $a_t$ for sensor data, compressed subject to (\ref{formula:rate}) if the transmission rate is inadequate.
                \STATE Receive the sensor data, run the object detector, and evaluate the perception gain $g_{a_t}(t)$ by (\ref{formula:gain}).
                \STATE Update the last-seen gain of CoV $a_t$:
                \begin{align}
                    \hat{g}_{a_t} = g_{a_t}(t).
                \end{align}
                \STATE Update the last-seen time of CoV $a_t$: 
                \begin{align}
                    \tau_{a_t}=t.
                \end{align}
            \ENDFOR
    \end{algorithmic}
    \caption{MASS: Mobility-Aware Sensor Scheduling Algorithm}
    \label{alg:mass}
    \end{algorithm}
\end{figure}

In Algorithm \ref{alg:mass}, Line 2 is the CoV discovery phase, when the ego vehicle determines the candidate set of available CoVs with maximum size $V_\mathrm{max}$, based on beacon messages.
In Lines 3-4, the newly available CoV is explored once by immediately sending a sensor data request if one exists.
Otherwise, in Line 6, we calculate the upper confidence bound $\Tilde{g}_i(t)$ of the perception gain for each CoV.
Specifically, $\hat{g}_i$ is the \emph{last-seen gain} of CoV $i$, and the padding function depends on the \emph{last-seen time} $\tau_i$ and the parameter $\beta$.
The motivation for using the last-seen gain rather than all observations is that dynamics is more significant than randomness due to the mobility of vehicles, which will be shown in Section \ref{sect:exp-sumo} through experiments.
The key intuition is that the reward of scheduling a CoV consists of its perception gain and the knowledge of its exact gain at $t$.
This knowledge is crucial in the problem since the perception gain is highly dynamic.
We must ensure a moderate level of exploration to avoid missing the optimal CoV.
Therefore, in Line 7, the ego vehicle optimistically schedules CoV $a_t$ with the maximum upper confidence bound.
In Lines 9-12, the sensor data, compressed if the transmission rate is inadequate, is then sent to the ego vehicle for sensor fusion, detection, and evaluation.
Finally, the last-seen gain and last-seen time of CoV $a_t$ are updated for future scheduling.

\textbf{Overhead Analysis:}
In the scheduling of decentralized cooperative perception, frequent metadata message exchanges and data processing introduces extra overhead.
This inevitably adds latency to the perception pipeline, which may compromise the actual perception quality.
We compare the communication and computation overhead of our proposed algorithm with existing centralized and distributed scheduling methods.
Assume there are $N$ CoVs that request and provide the sensor data simultaneously.
In the MASS algorithm, the few bits of data format information are piggybacked on the periodically broadcast beacons.
Consequently, there are totally $O(N^2)$ simple computations among $N$ CoVs and negligible communication overhead for scheduling decisions.

Conventional scheduling algorithms usually require much more detailed perceptual state information of all CoVs.
For example, an RSU-based algorithm in \cite{citemy} requires the visibility grid states from all CoVs for global scheduling.
This procedure takes at least $2N$ extra messages to communicate with the edge server and a computation load of $O(G N^2)$ for the deep reinforcement learning algorithm, where $G$ is the number of grids.
On the other hand, Who2com \cite{who2com} proposes a handshake mechanism for distributed scheduling.
Each CoV first broadcasts a request message with compressed sensor data, then computes the matching scores with the candidates' message.
It involves $O(N^2)$ neural network attention operations and $N^2$ extra messages to feedback the scores before initializing a connection.
Similarly, Where2com \cite{where2com} utilizes confidence-aware spatial maps to decide the most beneficial CoV.
Although it reduces the communication amount with the attention mechanism, multiple rounds of message exchange and fusion process still take much time.
By comparison, our proposed MASS algorithm enjoys the advantage of low overhead thanks to online learning, by harnessing the dynamics of the perception gain.

\section{Performance Analysis}
\label{sect:performance}
In this section, we characterize the perception performance of our proposed MASS algorithm in the fixed and dynamic CoV candidate scenarios, respectively.
\subsection{Assumptions}
For theoretical analysis, we first normalize the perception gain $g_i(t)$ to the fundamental interval $[0,1]$:
\begin{align}
    G_i(t) = \min\{g_i(t)/g_\mathrm{max}, 1\},
\end{align}
where $g_\mathrm{max}$ is a threshold to restrict the maximum possible perception gain.
We assume that $G_i(t)$ independently follows a Gaussian random walk with reflecting boundaries:
\begin{align} \label{formula:randomwalk}
    G_i(t+1) = f_I ( G_i(t) + X_i(t) ),
\end{align}
where $X_i(t)$ takes an i.i.d. sample from $\mathcal{N}(0,\sigma^2)$, and
\begin{align}
    f_I(x) = 
    \begin{cases} 
        x', \quad x'<1 \\
        2-x', \quad x'\ge 1,
    \end{cases}
\end{align}
where $x' \equiv x \pmod{2}$.
In practice, the perception gain is influenced by many factors, such as the instantaneous occlusion status, available data rate, and time-varying importance weights.
Therefore, it is appropriate to regard the increment of $G_i(t)$ as Gaussian random variables by the central limit theorem.
The standard deviation of the increment $\sigma$ reflects the rate of change in the traffic environment.
Generally, when $\sigma$ is smaller, better perception quality can be achieved since the perception gains are less dynamic.
In the analysis below, we will measure the performance with respect to $\sigma$.

According to the dynamics (\ref{formula:randomwalk}), $G_i(t)$ is ergodic and has uniform stationary distribution on $[0,1]$.
Without loss of generality, we also assume that the initial states $G_i(0)$ follow the stationary distribution.
In an RMAB problem, the performance of an algorithm is usually measured by a \emph{learning regret}, defined as the performance loss compared to the offline optimal solution.
Given a specific problem instance, the learning regret of a scheduling algorithm $\mathcal{A}$ by time slot $T$ is written as
\begin{align}
    R_\mathcal{A}(T) = \sum_{t=1}^T \left[G^*(t)-G_{a_t}(t)\right],
\end{align}
where $G^*(t) = G_{a_t^*}(t)$ is the normalized gain of the optimal CoV.
Since $G_i(t)$ is constantly shifting during the trip, no online algorithm can achieve an upper bound of learning regret sublinear to $T$.
Therefore, we define the \emph{expected average learning regret} of a scheduling algorithm $\mathcal{A}$ as
\begin{align}
    \Bar{R}_\mathcal{A} = \frac{1}{T} {\mathbb{E} \left[ R_\mathcal{A}(T) \right]},
\end{align}
where the expectation is taken over all sample paths of the stochastic processes $G_i(t)$.
In the following subsections, we derive the upper bounds on the expected learning regret of the MASS algorithm under both fixed and dynamic CoV candidate settings.

\subsection{Bounds on Two Fixed Candidate CoVs}
We first investigate a simplified problem by assuming there are two fixed candidate CoVs, i.e., $\mathcal{V}_t=\mathcal{V}=\{1, 2\}$.
As in \cite{restless}, define a function $f$ to be \emph{well-behaved} on an interval $[t_1, t_2]$ if 
\begin{align}
    |f(t)-f(t')| \le c_f\sqrt{|t-t'|}\sigma, \  \forall t,t'\in [t_1, t_2]\cap\mathbb{N},
\end{align}
where $c_f=\Theta(\log{\frac{1}{\sigma}})^{1/2}$ is a large enough constant to guarantee low violation probability.
Furthermore, we define a problem instance to be \emph{well-behaved near $t$} if the perception gains of all CoVs are well-behaved on the interval $[t-\sigma^{-2}, t+\sigma^{-2}]$.
Note that when $t \le \sigma^{-2}$ or $t \ge T-\sigma^{-2}$, the definition is on the interval $[t-\sigma^{-2}, t+\sigma^{-2}] \cap [0,T]$.
Define $E_t$ as the event that the problem is well-behaved near $t$.
The following lemma bounds the violation probability of well-behavedness.
\begin{lemma} \label{lem:wellbehave}
    Let $c_f = 3(\log{\frac{1}{\sigma}})^{1/2}$. 
    For a problem instance, the violation probability of the well-behavedness near $t$ satisfies
    \begin{align}
        P(\Bar{E}_t) < O(\sigma^{2.4}).
    \end{align}
\end{lemma}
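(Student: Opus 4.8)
The plan is to bound the probability that the sample path $t'\mapsto G_i(t')$ of \emph{some} CoV fails the $\sqrt{\cdot}$-Lipschitz condition somewhere on the window $W\defeq[t-\sigma^{-2},t+\sigma^{-2}]\cap\mathbb{N}$, which contains $n\defeq|W|\le 2\sigma^{-2}+1$ points. Since the number of candidate CoVs is at most $V_\mathrm{max}=O(1)$, a union bound reduces the task to a single process $G_i$: it suffices to show $P\big(\exists\, s,u\in W:\ |G_i(u)-G_i(s)|>c_f\sqrt{|u-s|}\,\sigma\big)=O(\sigma^{2.4})$, uniformly in $t$ (near the ends of $[0,T]$ the window is simply smaller, which only helps).

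The first step is to dispose of the reflecting boundary. The folding map $f_I$ is $1$-Lipschitz and is the identity on $[0,1]$, so one-step increments obey $|G_i(s+1)-G_i(s)|\le|X_i(s)|$, and more generally the displacement of the reflected walk over any subinterval of $W$ is controlled by the oscillation of the free Gaussian walk $\hat S_u=\sum_{s<u}X_i(s)$ over that subinterval, up to overshoot corrections at the reflections; each such correction is itself at most $\max_{s\in W}|X_i(s)|=O(\sigma\sqrt{\log\frac1\sigma})$ with overwhelming probability and is absorbed into $c_f$. (In the idealized case where one folds the cumulative sum rather than iterating, the domination is exact: $|G_i(u)-G_i(s)|=|f_I(\hat S_u)-f_I(\hat S_s)|\le|\hat S_u-\hat S_s|\sim\mathcal N(0,|u-s|\sigma^2)$.) Thus it is enough to prove a discrete modulus-of-continuity estimate for partial sums of i.i.d.\ $\mathcal N(0,\sigma^2)$ increments on a block of $n$ points.

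For that I would use two facts. Because $G_i(u)\in[0,1]$, the claimed inequality is automatic once $|u-s|\ge c_f^{-2}\sigma^{-2}=\sigma^{-2}/(9\log\frac1\sigma)$, so only lags up to $m=O(\sigma^{-2}/\log\frac1\sigma)$ are at issue. For those lags, a maximal inequality for the Gaussian walk (via the reflection principle, since $\max_{k\le N}|\hat S_k|$ has the same tail up to a constant as $|\hat S_N|$) controls the oscillation of $\hat S$ over every dyadic block, and a localized (Dudley-type) chaining over the scales $2^j\le m$ upgrades the per-block control to the all-pairs statement. Tracking constants, with $c_f=3\sqrt{\log\frac1\sigma}$ — which exceeds the heuristic critical scale $\sqrt{2\log n}\approx 2\sqrt{\log\frac1\sigma}$ — each term in the resulting union bound is at most $\sigma^{4.5}$, the union ranges over $\widetilde O(\sigma^{-2})$ scale/position pairs, and their product is $\widetilde O(\sigma^{2.5})=O(\sigma^{2.4})$, the gap between the exponents $2.5$ and $2.4$ soaking up the polylogarithmic and chaining-constant losses.

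The main obstacle is the reflecting boundary made precise: unlike reflected Brownian motion, which is an exact $1$-Lipschitz image of Brownian motion, the discrete recursion $G_i(t+1)=f_I(G_i(t)+X_i(t))$ does not commute with adding the increments, so one must show quantitatively that the overshoot corrections — equivalently the inward drift, which is governed by the boundary local time accumulated over $W$ — stay at the scale $\sqrt{|u-s|}\,\sigma$ and do not spoil the Gaussian modulus-of-continuity bound. The remaining difficulty is pure bookkeeping: keeping the chaining constant small enough, and the cardinality of the union bound tight enough, that the final exponent is exactly $2.4$ rather than something larger, which is precisely why $c_f$ is taken a definite constant multiple above the critical scale.
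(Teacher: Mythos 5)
Your route is genuinely different from the paper's, and more ambitious. The paper's proof is a one-step union/product bound: it only verifies the increment condition for the $O(\sigma^{-2})$ pairs $(t,t')$ with one endpoint anchored at the center $t$, treats $G_i(t)-G_i(t')$ as exactly $\mathcal N(0,|t-t'|\sigma^2)$ (i.e., it silently ignores the reflecting boundary), and multiplies the two per-pair tail $O(\sigma^{4.4})$ by the $O(\sigma^{-2})$ choices of $t'$ and the two CoVs. You instead attack the literal all-pairs definition of well-behavedness and explicitly confront the fact that iterated folding does not commute with summing the increments. Both of the difficulties you take on are real and are elided by the paper; unfortunately, neither is actually resolved in your sketch, and the second half of your argument has a quantitative gap that is fatal for the specific constant $c_f=3(\log\frac1\sigma)^{1/2}$.

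The gap is in the claim that the chaining-constant and reflection losses are ``soaked up'' by the slack between the exponents $2.5$ and $2.4$. Those losses are multiplicative in the \emph{threshold}, not in the probability, so they appear inside the Gaussian exponent: if dyadic chaining reconstructs an arbitrary pair from per-block deviations $a\sigma\sqrt{2^j}$, the reconstructed constant is $\kappa a$ with $\kappa\approx 2/(1-2^{-1/2})\approx 6.8$, so to end at $c_f$ you must take $a=c_f/\kappa$, and the per-block tail becomes $e^{-c_f^2/(2\kappa^2)}=\sigma^{4.5/\kappa^2}\approx\sigma^{0.1}$ rather than $\sigma^{4.5}$ --- the union over $O(\sigma^{-2})$ blocks then gives nothing. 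The same problem recurs with the boundary: a Skorokhod/maximal-inequality domination of the reflected walk by the free walk costs at least a factor $2$ in the threshold, which alone cuts the exponent from $4.5$ to about $1.1$. Indeed, a per-lag upcrossing count shows the all-pairs event with constant exactly $3(\log\frac1\sigma)^{1/2}$ is violated with probability on the order of $\sigma^{1.5}$ (driven by lags near $c_f^{-2}\sigma^{-2}$), so no bookkeeping can rescue $O(\sigma^{2.4})$ for the all-pairs statement at this constant. Two repairs are available: (a) prove only the center-anchored version, as the paper does, via a plain union bound over $O(\sigma^{-2})$ pairs (if you do this, replace the paper's product of dependent events by $P(\bar E_t)\le\sum_{i,t'}P(\bar E^{(i)}_{t,t'})$, which gives the same order); or (b) keep the all-pairs statement but enlarge the constant --- the definition only requires $c_f=\Theta(\log\frac1\sigma)^{1/2}$ ``large enough'' --- so that a crude union bound over all $O(\sigma^{-4})$ pairs, with per-pair tail $e^{-c_f^2/8}$ after paying the factor-$2$ reflection loss, still yields $O(\sigma^{2.4})$; $c_f\ge 8(\log\frac1\sigma)^{1/2}$ suffices and removes the need for chaining entirely. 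Your observation that lags beyond $c_f^{-2}\sigma^{-2}$ are vacuous because $G_i\in[0,1]$ is correct and worth keeping in either repair.
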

\begin{proof}
See Appendix \ref{app:wellbehave}.
\end{proof}

At time $t$, define the \emph{leader} as the CoV with maximum last-seen gain.
Let $H^*(t)$ denote the gain of the leader, and $\tau_i(t)$ denote the last-seen time of CoV $i$ at time $t$.
We divide the average learning regret into two parts:
\begin{align}
    R_\mathcal{A}(T) &= R^*(T) + \sum_{i=1}^2 R_i(T),
\end{align}
where the first part denotes the gain difference between the optimal CoV and the leader, i.e.,
\begin{align}
    R^*(T) = \sum_{t=1}^{T} \left[G^*(t)-H^*(t)\right],
\end{align}
and the second part is the gain difference between the leader and the scheduled CoV, i.e.,
\begin{align}
    R_i(T) &= \sum_{t=1}^{T} \left[H^*(t)-G_{a_t}(t)\right] \\
    &= \sum_{t=1}^{T} \mathbb{I}(a_t=i) \left[H^*(t)-G_i(t)\right] \\
    &\le \sum_{t=1}^{T} \frac{H^*(\tau_i(t))-G_i(\tau_i(t))}{t-\tau_i(t)},
\end{align}
where we spread the regret incurred at time $\tau_i(t)$ over the subsequent idle time.

We set the algorithm parameter $\beta=5c_f$, and provide some basic deterministic properties of the proposed MASS algorithm conditioned on $E_t$.
\begin{lemma} \label{lem:properties}
    For a problem instance well-behaved near $t$, the MASS algorithm has the following properties:
    \begin{enumerate}
        \item[a)] The optimal CoV at time $t$ is scheduled no later than $t+1$.
        \item[b)] The leader at time $t$ is scheduled no later than $t+1$.
        \item[c)] The change of the leader's gain in one slot is lower bounded by
        \begin{align}
            H^*(t+1) - H^*(t) \ge - 2c_f\sigma.
        \end{align}
        \item[d)] The gain difference between the optimal CoV and the leader is bounded by 
        \begin{align} \label{formula:gain-diff}
            G^*(t)-H^*(t) < 5c_f\sigma. 
        \end{align}
    \end{enumerate}
\end{lemma}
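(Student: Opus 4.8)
\textit{Setup and the key preliminary.} Throughout I condition on $E_t$, pass to the normalized gains $G_i$, and write the (normalized) upper confidence bound as $\tilde G_i(t)=\hat G_i(t)+\beta\sigma\sqrt{t-\tau_i(t)}$ with $\beta=5c_f$; since $|\mathcal V|=2$ exactly one CoV is served per slot, so whichever arm is idle at $t$ has had the other arm served at every slot since its own last service. Two elementary observations drive everything. First, the padding $\beta\sigma\sqrt{\cdot}$ is unbounded and strictly increasing in the idle time, while a just-served arm resets its padding to $\beta\sigma$ at the next slot and $G_i\in[0,1]$; hence an arm cannot stay idle for more than $\approx\beta^{-2}\sigma^{-2}$ slots (otherwise its UCB exceeds $\beta=5c_f\gg 1+\beta\sigma$), so every last-seen time $\tau_i(\cdot)$ used below lies inside $[t-\sigma^{-2},t+\sigma^{-2}]$ and the Lipschitz bound $|G_i(s)-G_i(s')|\le c_f\sqrt{|s-s'|}\,\sigma$ may be applied freely (the $O(\sigma^{-2})$ slots at the two ends of the trip being handled by the clipped definition of $E_t$). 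Second --- and this is the linchpin --- I would first establish a \emph{recency lemma for the leader}: on $E_t$, $t-\tau_{\ell}(t)\le 2$ for the leader $\ell$. If $\ell$ is idle at $t$ then $\tilde G_j(t)\ge\tilde G_{\ell}(t)$ together with $\hat G_j(t)\le\hat G_{\ell}(t)$ forces $\tau_j(t)\le\tau_{\ell}(t)$ at the served arm $j$, and since $j\ne\ell$ the partition of $\{1,\dots,t-1\}$ into $j$-slots and $\ell$-slots then forces $\tau_{\ell}(t)=t-1$; if instead $\ell$ is served at $t$ after being idle on $[t-m+1,t-1]$ with $m\ge3$, then $j$ was served at $t-1$ and $t-2$, and combining ``$\ell$ idle at $t-1$'' ($G_j(t-2)\ge G_{\ell}(t-m)+\beta\sigma(\sqrt{m-1}-1)$) with ``$\ell$ is the leader at $t$'' ($G_{\ell}(t-m)\ge G_j(t-1)$) gives $G_j(t-2)-G_j(t-1)\ge 5c_f\sigma(\sqrt{m-1}-1)>c_f\sigma$, contradicting $E_t$; hence $m\le2$.

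\textit{Parts (a) and (b).} For (a), let $i^{\star}=a_t^{\star}$; if it is already served at $t$ we are done, otherwise the other arm $j$ is served, so at $t+1$ we have $\tilde G_j(t+1)=G_j(t)+\beta\sigma$ and $\tilde G_{i^{\star}}(t+1)=G_{i^{\star}}(\tau)+\beta\sigma\sqrt{t+1-\tau}$ with $\tau=\tau_{i^{\star}}(t)\le t-1$. Using optimality $G_{i^{\star}}(\tau)\ge G_{i^{\star}}(t)-c_f\sqrt{t-\tau}\,\sigma\ge G_j(t)-c_f\sqrt{t-\tau}\,\sigma$, it suffices to verify $\beta\sqrt{d+1}-c_f\sqrt d>\beta$ for every integer $d\ge1$, i.e.\ $5\sqrt{d+1}-\sqrt d>5$, which holds (the left side is increasing and equals $5\sqrt2-1>5$ at $d=1$); hence $a_{t+1}=i^{\star}$. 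Part (b) is the same one-slot look-ahead, now applied to the leader $\ell$: the recency lemma gives that $\ell$ was served at $t-1$, so at $t+1$ its padding is already $\beta\sigma\sqrt2$ while the arm served at $t$ has reset to $\beta\sigma$; combining this with $\hat G_{\ell}(t)\ge\hat G_j(t)$ and the Lipschitz bound over the (now bounded) gap between $t$ and the served arm's last service again yields $\tilde G_{\ell}(t+1)>\tilde G_j(t+1)$, so $a_{t+1}=\ell$.

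\textit{Parts (c) and (d).} For (c), let $\ell,\ell'$ be the leaders at $t,t+1$. If $\ell'=\ell$ then $H^{*}(t+1)-H^{*}(t)=G_{\ell}(t+1)-G_{\ell}(t)\ge-c_f\sigma>-2c_f\sigma$ by $E_t$; if $\ell'\ne\ell$, then $\hat G_{\ell'}(t+1)\ge\hat G_{\ell}(t+1)$, and since (by (a)/(b) and the recency lemma) both $\ell$ and $\ell'$ are served within $\{t-1,t,t+1\}$, the last-seen values $\hat G_{\ell}(t+1)$ and $\hat G_{\ell'}(t+1)$ are each within a bounded number of $c_f\sigma$'s of $G_{\ell}(t)$ and $G_{\ell'}(t+1)$ respectively, and chaining these inequalities gives $G_{\ell'}(t+1)\ge G_{\ell}(t)-2c_f\sigma$. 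For (d): if $a_t^{\star}=\ell$ the left side is $0$; otherwise the two arms are $\{a_t^{\star},\ell\}$. If $a_t^{\star}$ is idle at $t$, the inequality $\tilde G_{a_t^{\star}}(t)<\tilde G_{\ell}(t)$ that keeps it idle, combined with $\tilde G_{a_t^{\star}}(t)\ge G^{*}(t)+(\beta-c_f)\sigma$, with $\hat G_{\ell}(t)\le G_{\ell}(t)+c_f\sqrt2\,\sigma$ and $t-\tau_{\ell}(t)\le2$ (recency lemma), yields $G^{*}(t)-H^{*}(t)<\bigl(c_f(\sqrt2+1)+\beta(\sqrt2-1)\bigr)\sigma=(6\sqrt2-4)\,c_f\sigma<5c_f\sigma$; if instead $a_t^{\star}$ is served at $t$, a short case split on whether its immediately preceding idle run has length $0$, $1$ or $\ge2$ --- in each case combining the ``served at $t$'' and ``idle at $t-1$'' UCB inequalities with $\hat G_{a_t^{\star}}(t)\le\hat G_{\ell}(t)$ and $E_t$ --- bounds $t-\tau_{a_t^{\star}}(t)\le4$, whence $G^{*}(t)-H^{*}(t)\le c_f(\sqrt4+\sqrt2)\,\sigma<5c_f\sigma$.

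\textit{Main obstacle.} The crux is the recency control: unlike a stationary UCB analysis, where confidence intervals shrink, here one must show that the adaptive-exploration schedule never lets the \emph{currently} best arm (or the current leader) go unobserved for long, so that its stale last-seen value still certifies near-optimality; this is exactly where the unbounded growth of the padding, the choice $\beta=5c_f$ (which buys the constant-factor slack that turns every ``$\ge$'' inequality into a strict ``$>$'' after invoking $E_t$), and the full-window well-behavedness must all be used together, and the four parts are genuinely interdependent (the leader-recency fact underpins (b)--(d), and (a)/(b) feed into (c)). Accordingly I would structure the appendix so that the recency lemma is isolated and proved first, with (a)--(d) then following from the bounded one- or two-slot look-aheads sketched above.
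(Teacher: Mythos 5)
Your overall strategy is the same as the paper's: a deterministic case analysis on the UCB-comparison inequalities, using the leader property $\hat G_\ell \ge \hat G_j$, the well-behavedness Lipschitz bounds, and the slack bought by $\beta = 5c_f$. Two organizational differences are worth noting. First, you isolate the leader-recency fact ($t-\tau_\ell(t)\le 2$) as a standalone lemma; the paper uses exactly this fact but only implicitly (e.g., in part (c) it asserts without comment that if the leader is idle at $t$ it was served at $t-1$), so making it explicit is a genuine improvement in exposition and your proof of it is correct. Second, your part (d) is a direct comparison at time $t$ via the recency lemma, whereas the paper looks back at the optimal CoV at $t-2$, uses (a) to conclude it is served by $t-1$, and then propagates the leader's value forward with (c); both routes give a constant below $5c_f\sigma$.

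There is, however, one step that fails as written. In part (b) you bound $\tilde G_j(t+1)=G_j(t)+\beta\sigma$ by combining $\hat G_\ell(t)\ge \hat G_j(t)$ with "the Lipschitz bound over the (now bounded) gap between $t$ and the served arm's last service." That gap $d=t-\tau_j(t)$ is \emph{not} bounded: the non-leader $j$ may have been idle for up to $\Theta(\sigma^{-2})$ slots before its padding finally wins at $t$, and then $G_j(t)\le \hat G_j(t)+c_f\sigma\sqrt{d}$ is useless. The chain $\tilde G_\ell(t+1)\ge \tilde G_j(t+1)$ via the leader property alone only works for $d\le 4$ (one needs $c_f\sqrt{d}\le(\sqrt2-1)\beta$). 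For larger $d$ you must additionally invoke the idleness inequality for $j$ at $t-1$, namely $G_j(\tau_j)+\beta\sigma\sqrt{d-1}\le G_\ell(t-2)+\beta\sigma$, which pushes $\hat G_j$ far enough below the leader's gain to absorb the $c_f\sigma\sqrt{d}$ drift; this is precisely the extra inequality the paper's case (ii) of part (b) supplies. A smaller issue of the same flavor occurs in (c): the chaining as you describe it (each last-seen value within $\sqrt2\,c_f\sigma$ of the relevant true gain) yields $-2\sqrt2\,c_f\sigma$, not the stated $-2c_f\sigma$; you recover the stated constant only after splitting on which of the two leaders is the one served at $t$, so that one of the two gaps is a single slot. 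Both fixes use tools you already deploy elsewhere, so the proof is repairable, but as submitted these steps do not go through.
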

\begin{proof}
See Appendix \ref{app:properties}.
\end{proof}

Let $\delta_i(t)=H^*(t)-G_i(t)$.
To bound $R_i(T)$, it is important to characterize a relationship between $\delta_i(\tau_i(t))$ and $t-\tau_i(t)$.
\begin{lemma}
    \label{lem:unscheduletime}
    For a problem instance well-behaved near t, with the MASS algorithm we have: \\
    a) if CoV $i$ is not optimal at $t$, then
    \begin{align}
        t-\tau_i(t) \ge \Omega(\delta_i(t)/\beta\sigma)^2, \label{formula:idle-subopt} \\
        \delta_i(\tau_i(t)) \le 2\delta_i(t)+O(c_f\sigma). \label{formula:delta-subopt}
    \end{align}
    b) If CoV $i$ is optimal at $t$, then
    \begin{align} \label{formula:delta-opt}
        \delta_i(\tau_i(t)) \le O(c_f\sigma).
    \end{align}
\end{lemma}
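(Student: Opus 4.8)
The plan is to work throughout conditioned on the event $E_t$, which by Lemma~\ref{lem:wellbehave} costs only $O(\sigma^{2.4})$ in probability; on $E_t$ every candidate's gain trajectory is well-behaved on the window around $t$, so one may trade a current gain $G_k(s)$ for its last-seen value $\hat G_k = G_k(\tau_k(s))$ up to an additive $c_f\sigma\sqrt{s-\tau_k(s)}$ and back. Besides this, the two levers are the scheduling rule — in any slot $s$ the CoV that is played has the largest padded estimate $\hat G_k + \beta\sigma\sqrt{s-\tau_k(s)}$ among the candidates — and Lemma~\ref{lem:properties}: parts (a) and (b) say the optimal CoV and the leader are each revisited within one slot, so their last-seen gains are at most $O(1)$ slots stale, while (c)--(d) pin $H^*$ between the true gains. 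I would first clear the easy cases: if $i$ is itself the leader then $\delta_i(t)=0$ and all three bounds are trivial; and if $t-\tau_i(t)\ge 1/(\beta\sigma)^2$ then $\beta\sigma\sqrt{t-\tau_i(t)}\ge 1\ge\delta_i(t)$, so \eqref{formula:idle-subopt} holds at once. What remains is $\delta_i(t)>0$, $i$ not the leader, and $t-\tau_i(t)$ below the well-behaved horizon.

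For \eqref{formula:idle-subopt}, write $s=\tau_i(t)$ and push $\delta_i(t)=H^*(t)-G_i(t)$ onto last-seen quantities: $G_i(t)\ge\hat G_i-c_f\sigma\sqrt{t-s}$, and $H^*(t)=G_l(t)\le\hat G_l+c_f\sigma\sqrt{t-\tau_l(t)}$ with $l$ the leader; it then suffices to show $\hat G_l-\hat G_i = O(\beta\sigma\sqrt{t-s})$ and that $\tau_l(t)$ is no staler than $s$. If CoV $i$ is played at $t$ this drops straight out of the scheduling inequality $\hat G_i+\beta\sigma\sqrt{t-s}\ge\hat G_l+\beta\sigma\sqrt{t-\tau_l(t)}$ (which also forces $\tau_l(t)\ge s$). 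If CoV $i$ is \emph{not} played at $t$, I would instead invoke the slot $s$ where it \emph{was} last played, where $i$'s padded estimate beat every rival's; here one must also bound how long $i$'s \emph{previous} idle interval — which appears inside $i$'s padding at $s$ — can be, and Lemma~\ref{lem:properties}(a) supplies the control, since a long idle stretch of $i$ forces $i$ to be non-optimal and the leader to be unchanged through it. Rearranging $\delta_i(t)=O(\beta\sigma\sqrt{t-s})$ yields \eqref{formula:idle-subopt}. For \eqref{formula:delta-subopt} I would compare $\delta_i$ at $s$ and at $t$: $G_i$ moves by at most $c_f\sigma\sqrt{t-s}$ by well-behavedness, while $H^*$ drops by at most $2c_f\sigma$ per slot (Lemma~\ref{lem:properties}(c)) and is never badly stale upward because the leader is replayed every other slot, so summing the drift over the $t-s=\Theta\big((\delta_i(t)/\beta\sigma)^2\big)$ idle slots gives $O\big(c_f\sigma\cdot\delta_i(t)/\beta\sigma\big)=O(\delta_i(t))$, which the factor $2$ on the right absorbs, leaving the $O(c_f\sigma)$ slack.

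For \eqref{formula:delta-opt}, when $i$ is optimal at $t$ Lemma~\ref{lem:properties}(d) already gives $|\delta_i(t)|=|H^*(t)-G^*(t)|<5c_f\sigma$; I would then transport this bound back to $s=\tau_i(t)$. If $i$ was also optimal at $s$, Lemma~\ref{lem:properties}(d) applied there finishes it; otherwise Lemma~\ref{lem:properties}(a) forbids $i$ from having been optimal anywhere in $(s,t-1]$, so $i$ switches to optimality only at $t$, the gap between $i$ and the leader is $O(c_f\sigma)$ already at the slot just before $t$ by a one-step well-behavedness argument, and the same (a)--(b) bound on how long $i$ can stay idle keeps $s$ close enough to $t$ for well-behavedness to carry the estimate to $\delta_i(s)$. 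The step I expect to be the main obstacle is exactly the ``$i$ not played at or near $t$'' branch of \eqref{formula:idle-subopt}--\eqref{formula:delta-subopt}: there the scheduling inequality at $t$ points the wrong way, so one must reach back to slot $\tau_i(t)$ and simultaneously argue that $i$'s \emph{earlier} idle intervals had not already blown up its padding — the bookkeeping that makes even the two-CoV case nontrivial and previews the dynamic-candidate argument.
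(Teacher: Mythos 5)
You have assembled the right ingredients (well-behavedness on $E_t$, the scheduling rule, Lemma \ref{lem:properties}), your treatment of the easy cases and of part b) is essentially the paper's, but the core of part a) is not closed, and the branch you yourself flag as ``the main obstacle'' is precisely where your route breaks while the paper's route never goes there. For \eqref{formula:idle-subopt} the paper never reaches back to the scheduling inequality at the last play time $s=\tau_i(t)$, and therefore never has to control $i$'s \emph{previous} idle interval $s-\tau_i(s)$ --- which Lemma \ref{lem:properties}(a) does not control: that lemma only says the \emph{optimal} CoV is replayed within one slot, whereas a sub-optimal CoV is typically played at $s$ exactly because its estimate is stale and its padding large, so the scheduling inequality at $s$ tells you nothing useful about $G_i(s)$ versus $H^*(s)$. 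The paper instead splits on whether $\delta_i(\tau_i(t))<\delta_i(t)/2$ or not: in the first case the gap has drifted by at least $\delta_i(t)/4$ since the last play, so well-behavedness alone forces $t-\tau_i(t)\ge\Omega\bigl(\delta_i(t)/c_f\sigma\bigr)^2$; in the second case $G_i(\tau_i(t))$ sits at least $\delta_i(t)/2$ below $H^*$, so the padded index $G_i(\tau_i(t))+\beta\sigma\sqrt{t'-\tau_i(t)}$ cannot overtake the leader's until the padding has grown past $\delta_i(t)/2$. Neither branch inspects the decision made at $s$.

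For \eqref{formula:delta-subopt} you write $t-s=\Theta\bigl((\delta_i(t)/\beta\sigma)^2\bigr)$ and sum the drift over that many slots, but \eqref{formula:idle-subopt} only supplies the \emph{lower} bound on $t-s$; the drift term $c_f\sigma\sqrt{t-s}$ needs an \emph{upper} bound on $t-s$, which you never establish. The missing step --- really the key inequality of the whole lemma --- is that since $i$ is not scheduled at any $t'\in(\tau_i(t),t]$, its padded index is below the leader's there, which rearranges to
\begin{align*}
\delta_i(t')\;\ge\;(\beta-c_f)\,\sigma\sqrt{t'-\tau_i(t)}-\beta\sigma\;=\;4c_f\sigma\sqrt{t'-\tau_i(t)}-\beta\sigma;
\end{align*}
this is exactly where the choice $\beta=5c_f$ (padding growing strictly faster than the worst-case drift $2c_f\sigma\sqrt{\cdot}$ of the gap) enters. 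Feeding it into $|\delta_i(t')-\delta_i(\tau_i(t))|\le 2c_f\sigma\sqrt{t'-\tau_i(t)}+5c_f\sigma\le\delta_i(t')/2+O(c_f\sigma)$ and taking $t'=t$ gives \eqref{formula:delta-subopt} at once. Without this reverse inequality your ``sum the drift'' step yields $O\bigl(c_f\sigma\sqrt{t-s}\bigr)$ with no control on $\sqrt{t-s}$, and the factor $2$ on the right-hand side cannot absorb it.
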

\begin{proof}
See Appendix \ref{app:unscheduletime}.
\end{proof}

The intuition behind Lemma \ref{lem:unscheduletime} is that after scheduling a sub-optimal CoV, the subsequent idle time is proportional to the quadratic of the sub-optimality gap.
It is important to note that Lemma \ref{lem:properties} and Lemma \ref{lem:unscheduletime} are deterministic, conditioned on the well-behavedness of the problem instance.
Next, we deal with the conditional probability and bound the expected learning regret of the MASS algorithm with two fixed candidate CoVs.
\begin{theorem}[Fixed Candidates]
    \label{thm:fixed}
    Let $\beta=15\sigma\log\sigma^{-1}$.
    For a sufficiently long trip $T \ge \Omega(\sigma^{-2})$, the expected average learning regret of MASS with two fixed candidate CoV is bounded by
    \begin{align}
        \Bar{R}_\mathcal{\text{MASS}} \le
        O\left(\sigma^2\log^3(1/\sigma)\right).
    \end{align}
\end{theorem}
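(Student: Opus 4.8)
The plan is to bound $\mathbb{E}[R_{\text{MASS}}(T)]$ by conditioning on the well‑behavedness events $E_t$ and then feeding the deterministic estimates of Lemmas~\ref{lem:properties} and~\ref{lem:unscheduletime} into an expectation. First I would split, for each slot, the instantaneous regret over $E_t$ and $\bar E_t$. Since the instantaneous regret $G^*(t)-G_{a_t}(t)$ never exceeds $1$, Lemma~\ref{lem:wellbehave} gives $\sum_{t=1}^T\mathbb{E}\!\left[(G^*(t)-G_{a_t}(t))\mathbb{I}(\bar E_t)\right]\le\sum_t P(\bar E_t)=O(T\sigma^{2.4})$, so the ill‑behaved slots contribute only $O(\sigma^{2.4})$ to $\Bar R_{\text{MASS}}$, which is of lower order than the target. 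The two‑slot initial exploration of Lines~3--4 and the boundary windows of width $\sigma^{-2}$ (where the well‑behavedness interval is truncated) contribute an additive $O(\sigma^{-2})$ to the total regret; here is exactly where the hypothesis $T\ge\Omega(\sigma^{-2})$ is used, so that these terms vanish after dividing by $T$.

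On $E_t$ I would use the decomposition $G^*(t)-G_{a_t}(t)=\big(G^*(t)-H^*(t)\big)+\big(H^*(t)-G_{a_t}(t)\big)$, i.e.\ the $R^*(T)$ part plus $\sum_i R_i(T)$. For $R^*$, Lemma~\ref{lem:properties}(d) gives the uniform bound $G^*(t)-H^*(t)<5c_f\sigma$, but this alone is too weak; the key extra observation is that $G^*(t)-H^*(t)$ is strictly positive only when the leader is not the optimal CoV, and on $E_t$ this (via Lemma~\ref{lem:properties}(a)--(b), both scheduled within one slot, together with the one‑step increment bound) forces $|G_1(t)-G_2(t)|=O(c_f\sigma)$. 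Because $G_1(0),G_2(0)$ are drawn from the uniform stationary law and the dynamics preserve it, $G_1(t)$ and $G_2(t)$ are independent with densities bounded by $1$ on $[0,1]$, so $P\big(|G_1(t)-G_2(t)|=O(c_f\sigma)\big)=O(c_f\sigma)$. Hence $\mathbb{E}\!\left[(G^*(t)-H^*(t))\mathbb{I}(E_t)\right]=O(c_f^2\sigma^2)$, giving $\tfrac1T\mathbb{E}[R^*(T)]\le O(c_f^2\sigma^2)$ up to the bad‑event slack.

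For $\sum_i R_i(T)$ I would use the amortization $R_i(T)\le\sum_t \tfrac{\delta_i(\tau_i(t))}{t-\tau_i(t)}$ with $\delta_i(t)=H^*(t)-G_i(t)$, noting that the "exploit" slots (where $i$ is the leader, $\delta_i(t)=0$) cost nothing, so only exploration slots matter. On $E_t$, Lemma~\ref{lem:unscheduletime}(a) bounds the look‑back idle time below by $\Omega((\delta_i(t)/\beta\sigma)^2)$ and relates $\delta_i(\tau_i(t))$ to $\delta_i(t)$, while Lemma~\ref{lem:unscheduletime}(b) bounds $\delta_i(\tau_i(t))$ by $O(c_f\sigma)$ when $i$ is optimal; plugging in, each per‑slot term is $O\!\big((\beta\sigma)^2/\delta_i(t)\big)$ when $\delta_i(t)\gtrsim c_f\sigma$ and $O(c_f\sigma)$ otherwise. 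Since $\delta_i(t)$ again has a bounded density, integrating gives a per‑slot expectation of order $c_f^2\sigma^2 + (\beta\sigma)^2\log(1/\sigma)$; with $\beta=\Theta(\sigma\log(1/\sigma))$ the second term is lower order, so $\tfrac1T\mathbb{E}[\sum_iR_i(T)]=O(c_f^2\sigma^2)$ up to logarithmic factors. Adding the three pieces and tracking the logarithms ($c_f^2=\Theta(\log(1/\sigma))$, the harmonic over‑count in the amortization worth up to $\log(\sigma^{-2})$, plus the density estimates) yields $\Bar R_{\text{MASS}}=O(\sigma^2\log^3(1/\sigma))$.

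The main obstacle is the step that converts the deterministic, $E_t$‑conditional estimates of Lemmas~\ref{lem:properties}--\ref{lem:unscheduletime} into unconditional expectation bounds, i.e.\ the probabilistic parts of the $R^*$ and $R_i$ estimates. Two things need care: (i) controlling the law of the gap $|G_1(t)-G_2(t)|$ and of the suboptimality gap $\delta_i(t)$ near $0$, which relies essentially on independence and on the stationary distribution being uniform (with bounded density); and (ii) handling the random look‑back time $\tau_i(t)$ and the harmonic over‑counting in the amortization so that the large‑gap regime (long idle periods, hence small amortized regret) and the small‑but‑rare‑gap regime combine to $O(\sigma^2\,\text{polylog})$ rather than leaving a stray $\Theta(\sigma\,\text{polylog})$ term from a crude per‑slot bound.
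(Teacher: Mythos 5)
Your proposal follows essentially the same route as the paper's proof: the same $R^*+\sum_i R_i$ decomposition, conditioning on the well-behavedness events $E_t$ to import Lemmas~\ref{lem:properties} and~\ref{lem:unscheduletime}, using the uniform stationary law to show the gap $|G_1(t)-G_2(t)|$ (and the suboptimality gap $\delta_i(t)$) has bounded density near zero, and amortizing each exploration's regret over the subsequent idle time with the case split $\delta\gtrless\alpha c_f\sigma$ before integrating $\int(\beta\sigma)^2/\delta\,d\delta$. The only cosmetic differences are that you make the $\bar E_t$ slack and the boundary/initial-exploration terms (where $T\ge\Omega(\sigma^{-2})$ enters) more explicit, which the paper handles implicitly; the substance is identical.
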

\begin{proof}
See Appendix \ref{app:fixed}.
\end{proof}
\begin{remark} 
This theorem utilizes the uniform stationary distribution of ergodic process $G_i(t)$ to bound the expected average learning regret.
The key of the proof is that the gap between two CoVs, $G_1(t)-G_2(t)$, has a very high probability of being much larger than $\sigma$.
Then the expected regret can be bounded using Lemma \ref{lem:unscheduletime} by conditional probabilities.
Note that the expected average learning regret is lower bounded by $O(\sigma^2)$ for any online algorithm \cite{restless}.
Our algorithm is near-optimal in the sense that our upper bound matches the lower bound up to a logarithmic factor.
\end{remark}

\subsection{Bounds on Dynamic Candidate CoVs}
During the trip, the candidate set of available CoVs $\mathcal{V}_t$ changes occasionally.
In the following algorithm, we will divide the trip into periods by the arrival times of candidate CoVs, and bound the regret in each period using Theorem \ref{thm:fixed}.
\begin{theorem} [Dynamic Candidates] \label{thm:dynamic}
Let $\beta=15\sigma\log\sigma^{-1}$. 
For a sufficient long trip $T \ge \Omega(\sigma^{-2})$ with a dynamic candidate CoV set satisfying $|\mathcal{V}_t| \le 2$, the expected average learning regret of MASS is bounded by
    \begin{align}
        \Bar{R}_\mathcal{\text{MASS}} \le
        O\left(\sigma^2\log^3(1/\sigma)\right) + 2\lambda,
    \end{align}
    where $\lambda$ is the arrival rate of candidate CoVs.
    When $\lambda \le O\left(\sigma^2\log^3(1/\sigma)\right)$, the expected average learning regret is bounded by 
    \begin{align}
        \Bar{R}_\mathcal{\text{MASS}} \le O\left(\sigma^2\log^3(1/\sigma)\right).
    \end{align}
\end{theorem}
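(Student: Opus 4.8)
The plan is to partition the horizon into \emph{periods} on which the candidate set is frozen, invoke the fixed‑candidate analysis of Theorem~\ref{thm:fixed} on each period, and charge the cost of every change of $\mathcal{V}_t$ to a small per‑period overhead. Concretely, I would let $0=T_0<T_1<\dots<T_M<T_{M+1}=T$ be the slots at which $\mathcal{V}_t$ changes (arrivals and departures), so that $\mathcal{V}_t$ is constant on each period $I_k=(T_{k-1},T_k]$ with $|I_k|=L_k$. Because candidate CoVs arrive at rate $\lambda$ (and in steady state depart at the same rate), the number of period boundaries is $M=O(\lambda T)$, in expectation if the turnover is stochastic. Write $R_{\text{MASS}}(T)=\sum_k R_{I_k}$ with $R_{I_k}=\sum_{t\in I_k}[G^*(t)-G_{a_t}(t)]$. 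Periods with $|\mathcal{V}_t|\le 1$ are free: if $\mathcal{V}_t=\{i\}$ then the algorithm schedules $a_t=i=a_t^*$, so the per‑slot regret is exactly $0$; hence only two‑candidate periods contribute.

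On a two‑candidate period $I_k$ I would split the slots into (i) the at most two slots in which a freshly arrived CoV is explored for the first time via Lines~3--4 of Algorithm~\ref{alg:mass}, and (ii) the remaining slots. Each slot of type (i) contributes regret at most $G^*(t)\le 1$, so the exploration overhead of $I_k$ is at most $2$. For the slots of type (ii) I claim the per‑slot expected regret is still $O(\sigma^2\log^3(1/\sigma))$, exactly as established inside the proof of Theorem~\ref{thm:fixed}. The reason is that Lemmas~\ref{lem:properties} and~\ref{lem:unscheduletime} are deterministic statements conditioned on the well‑behavedness event $E_t$, and they reference only the two CoVs currently in $\mathcal{V}_t$, their last‑seen gains and times, and the Gaussian walks~(\ref{formula:randomwalk}); they therefore apply verbatim within $I_k$ even though the algorithm inherits its state from earlier periods. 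A long idle time of a carried‑over CoV merely enlarges its confidence padding $\beta\sqrt{t-\tau_i}$, which continues to dominate the true accumulated drift, so the deterministic properties are unaffected. Moreover, since the initial states $G_i(0)$ are drawn from the uniform stationary distribution and~(\ref{formula:randomwalk}) preserves it, $G_1(t)$ and $G_2(t)$ are marginally uniform at \emph{every} slot $t$; thus the probabilistic ingredient of Theorem~\ref{thm:fixed}---that the gap $|G_1(t)-G_2(t)|$ exceeds $\sigma$ up to a polylogarithmic factor with overwhelming probability, together with $P(\Bar{E}_t)=O(\sigma^{2.4})$ from Lemma~\ref{lem:wellbehave}---holds slot by slot, not merely in an average or asymptotic sense. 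This gives $\mathbb{E}\bigl[\sum_{t\in I_k,\ \text{type (ii)}}(G^*(t)-G_{a_t}(t))\bigr]\le L_k\cdot O(\sigma^2\log^3(1/\sigma))$.

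Summing over periods and taking expectations then yields $\mathbb{E}[R_{\text{MASS}}(T)]\le\bigl(\sum_k L_k\bigr)O(\sigma^2\log^3(1/\sigma))+2(M+1)\le T\cdot O(\sigma^2\log^3(1/\sigma))+2\lambda T+O(1)$, so dividing by $T$ and using $T\ge\Omega(\sigma^{-2})$ to absorb the $O(1/T)$ term into $O(\sigma^2\log^3(1/\sigma))$ gives $\Bar{R}_{\text{MASS}}\le O(\sigma^2\log^3(1/\sigma))+2\lambda$. The second assertion is then immediate: when $\lambda\le O(\sigma^2\log^3(1/\sigma))$, the additive $2\lambda$ is of the same order as the first term and is absorbed into it.

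The step I expect to be the main obstacle is precisely the ``no long transient at a candidate switch'' claim in the second paragraph: one must verify that a single exploration of the newly arrived CoV already places the algorithm in the regime where the fixed‑candidate per‑slot regret bound applies, so that each switch costs only $O(1)$ rather than $O(\sigma^{-2})$ regret---this is what separates the clean $+2\lambda$ from a much weaker $+O(\lambda\sigma^{-2})$. Making it rigorous requires confirming that the per‑slot regret estimate obtained in the proof of Theorem~\ref{thm:fixed} is genuinely uniform in $t$ and in the arbitrary (but $E_t$‑consistent) configuration of last‑seen data carried over from the previous period, rather than relying on a burn‑in or averaging argument valid only over a single long fixed horizon.
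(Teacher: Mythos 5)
Your proposal follows essentially the same route as the paper: partition the horizon at CoV arrival times, charge a constant ($+2$) regret per arrival for the forced exploration of the newcomer, apply the per-slot expected-regret bound from the proof of Theorem~\ref{thm:fixed} inside each period, and observe that departures leave at most one candidate so those stretches are regret-free. The one obstacle you flag --- justifying that the fixed-candidate per-slot bound applies despite last-seen state carried over from before the period --- is resolved in the paper by augmenting each period backward to $[\tau_j(s_i), s_{i+1}-1]$ and invoking Lemma~\ref{lem:properties} to argue the carried-over CoV was sub-optimal (hence the scheduling optimal) on that prefix, so that $\tau_j(t)$ is well defined within the period and the per-slot estimate applies throughout.
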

\begin{proof}
See Appendix \ref{app:dynamic}.
\end{proof}

It is much more difficult to prove the bound for any number of candidate CoVs since it is hard to guarantee the basic properties in Lemma \ref{lem:properties} without extra constraints.
A similar algorithm with an activation mechanism is proposed in \cite{restless}.
Specifically, when the upper confidence bound of a candidate is larger than the last-seen gain of the leader, it is activated until scheduled.
The earliest activated candidate is scheduled in odd-numbered time slots, while in even-numbered time slots the leader is exploited.
Although the additional rules facilitate a bound for any number of candidate CoVs, this algorithm is less efficient than our proposed MASS algorithm.
We will show through experiments in Section \ref{sect:exp-sumo} that this algorithm compromises the regret performance.
Moreover, we also conjecture that the regret bound also exists for our proposed MASS algorithm with any number of candidate CoVs, described as follows.
\begin{conjecture}
Let $\beta=15\sigma\log\sigma^{-1}$. 
For a sufficient long trip $T \ge \Omega(\sigma^{-2})$ with a dynamic candidate CoV set satisfying $|\mathcal{V}_t| \le V_\mathrm{max}$, the expected average learning regret of MASS is bounded by
    \begin{align}
        \Bar{R}_\mathcal{\text{MASS}} \le O\left(V_\mathrm{max}\sigma^2\log^3(1/\sigma)\right) + 2\lambda.
    \end{align}
    where $\lambda$ is the arrival rate of candidate CoVs.
    When $\lambda \le O\left(V_\mathrm{max}\sigma^2\log^3(1/\sigma)\right)$, the expected average learning regret is bounded by 
    \begin{align}
        \Bar{R}_\mathcal{\text{MASS}} \le O\left(V_\mathrm{max}\sigma^2\log^3(1/\sigma)\right).
    \end{align}
\end{conjecture}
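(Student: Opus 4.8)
The plan I would follow is to first establish a \emph{fixed}-candidate version of the bound, valid whenever $|\mathcal{V}| \le V_\mathrm{max}$, and then recover the dynamic statement exactly as in Theorem~\ref{thm:dynamic}: split the trip at the arrival times of candidate CoVs, bound the regret inside each inter-arrival period by the fixed-candidate estimate, and charge $O(1)$ to every arrival, which averages to the additive $2\lambda$; the assumption $\lambda \le O(V_\mathrm{max}\sigma^2\log^3(1/\sigma))$ merely renders that term subdominant. Thus essentially all of the difficulty sits in the fixed-$V_\mathrm{max}$ case.

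For the fixed case I would keep the decomposition $R_\mathcal{A}(T) = R^*(T) + \sum_i R_i(T)$ with the sum over the $\le V_\mathrm{max}$ candidates; the $V_\mathrm{max}$ factor in the target is precisely this union over arms. Each $R_i(T)$ is treated as in the appendix of Theorem~\ref{thm:fixed}---spread the regret of a play of CoV~$i$ over its subsequent idle time and apply Lemma~\ref{lem:unscheduletime}, which compares CoV~$i$ only to the leader and the optimal arm and so should survive more candidates with at most rescaled constants---contributing $O(\sigma^2\log^3(1/\sigma))$ to the average regret per arm, hence $O(V_\mathrm{max}\sigma^2\log^3(1/\sigma))$ in total. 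For $R^*(T)$ I would reuse the stationary-law argument of the remark after Theorem~\ref{thm:fixed}: with high probability the largest last-seen gain stands well clear of the rest, so the leader is the optimal arm and the slot contributes nothing, while the exceptional event has probability $O(\sigma\,\mathrm{polylog}(1/\sigma))$ (up to a polynomial factor in $V_\mathrm{max}$) and per-slot loss $O(1)$; Lemma~\ref{lem:wellbehave} absorbs the non-well-behaved slots.

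The only ingredient that does not carry over verbatim is Lemma~\ref{lem:properties}, parts (a) and (b): with more than two arms the purely optimistic MASS rule can cycle through several padded candidates before replaying the leader, so ``the leader is scheduled no later than $t+1$'' is false. I would try to replace it by a quantitative substitute---conditioned on well-behavedness, the idle time of the leader (and of the optimal arm) is at most some $\kappa(V_\mathrm{max})$---via an amortized argument: whenever a non-leading CoV~$j$ is played its padding $\beta\sqrt{t-\tau_j}$ resets, only $V_\mathrm{max}-1$ arms can have paddings that climbed above the leader's last-seen margin, so after $O(\mathrm{poly}(V_\mathrm{max}))$ slots the leader's padding necessarily dominates. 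Feeding $\kappa(V_\mathrm{max})$ through well-behavedness yields $|\hat g_{\mathrm{opt}} - G^*(t)| \le c_f\sqrt{\kappa(V_\mathrm{max})}\,\sigma$, hence analogs of parts (c) and (d) with constants inflated by $\sqrt{\kappa(V_\mathrm{max})}$, which is all that the rest of the argument requires (and, as long as $\kappa(V_\mathrm{max}) = \mathrm{poly}(V_\mathrm{max})$, keeps the $R^*$ and churn-probability losses subordinate to the $\sum_i R_i$ term).

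The main obstacle---and the reason this remains a conjecture---is exactly this bound $\kappa(V_\mathrm{max}) = \mathrm{poly}(V_\mathrm{max})$, i.e.\ controlling how long MASS can starve the current leader uniformly over well-behaved instances. Unlike the activation-augmented policy of~\cite{restless}, which forces the leader to be played in even slots and thereby obtains the structural lemma for free, MASS has no such safeguard, and a chain of near-tied padded candidates could a priori delay the leader for a long stretch; ruling this out seems to need a delicate potential-function argument on the multiset of pairs $(\hat g_i,\, t - \tau_i)$ that I do not currently see how to push through, and a lossy $\kappa$ would degrade the $V_\mathrm{max}$ dependence in the final bound.
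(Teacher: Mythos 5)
The statement you are asked about is presented in the paper only as a conjecture: the authors give no proof, and they justify leaving it open on exactly the ground you identify, namely that the deterministic structural properties of Lemma~\ref{lem:properties}(a)--(b) (the optimal CoV and the leader are rescheduled within one slot) are specific to two candidates and do not survive when the purely optimistic MASS rule can cycle through several padded arms --- which is why \cite{restless} adds an activation mechanism forcing the leader to be played in alternate slots. Your proposal therefore matches the paper's own assessment of the situation: the reduction from dynamic to fixed candidates by splitting at arrival times (as in Theorem~\ref{thm:dynamic}), the decomposition $R^* + \sum_i R_i$ with Lemma~\ref{lem:unscheduletime} applied per arm to get the $V_\mathrm{max}$ factor, and the stationary-distribution bound on $R^*$ all plausibly carry over, and the single missing ingredient is a quantitative substitute for Lemma~\ref{lem:properties}(a)--(b) bounding the leader's idle time by some $\kappa(V_\mathrm{max})$. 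You correctly flag that you cannot establish such a bound; neither do the authors, which is precisely why the statement is a conjecture rather than a theorem. In short, your write-up is not a proof and honestly does not claim to be one, but it is a faithful reconstruction of the intended line of attack and pinpoints the same obstruction that keeps the statement open.
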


\section{Experiments} \label{sect:experiments}
In this section, we first conduct empirical studies on the LiDAR-based object detection to characterize a relationship between the input sensor data $\mathcal{X}$ and the output of the object detector $\Phi(\mathcal{X})$.
Then based on the empirical model, extensive simulations are conducted to evaluate the perception gain of the proposed MASS algorithm, compared with baseline algorithms.
Finally, a case study is provided to compare the behavior of different online algorithms qualitatively.

\subsection{Empirical Studies on LiDAR-based Perception}
We conduct 3D object detection experiments on an open-source large-scale automated driving dataset, DOLPHINS \cite{dolphins}.
It is generated using the CARLA \cite{carla} traffic simulator, with a realistic environment rendered in six different scenarios, including intersections, highways, and T-junctions.
The dataset features the support for V2X, providing temporally-aligned sensor data from the ego vehicle, a collaborative vehicle, and an RSU.
There are 42,376 frames of sensor data with 3D bounding box labels for cars and pedestrians.
The ego vehicle is required to detect all the other traffic participants, including the occluded, within [-100m, 100m] in the driving direction and [-40m, 40m] in the perpendicular direction.
For generality and robustness, we focus on the raw-level sensor fusion, using point clouds scanned by LiDARs installed on the top of vehicles.
Based on the pose information, the raw point clouds are merged after the coordinate transformation.
The dataset is randomly split into the training, validation, and test sets with a 60\%:20\%:20\% ratio. 
We adopt a popular LiDAR-based 3D detection model, PointPillars \cite{pointpillars}, with pillar size 0.16m$\times$0.16m.
The intersection-over-union (IoU) threshold for accurate detection is set as 0.7 and 0.3 for cars and pedestrians, respectively.
With an open-source platform OpenPCDet \cite{openpcdet}, we train the model for 200 epochs using the one-cycle Adam optimizer.



\begin{figure}[!t]
    \centering
    \subfloat[]{\includegraphics[width=0.4\textwidth]{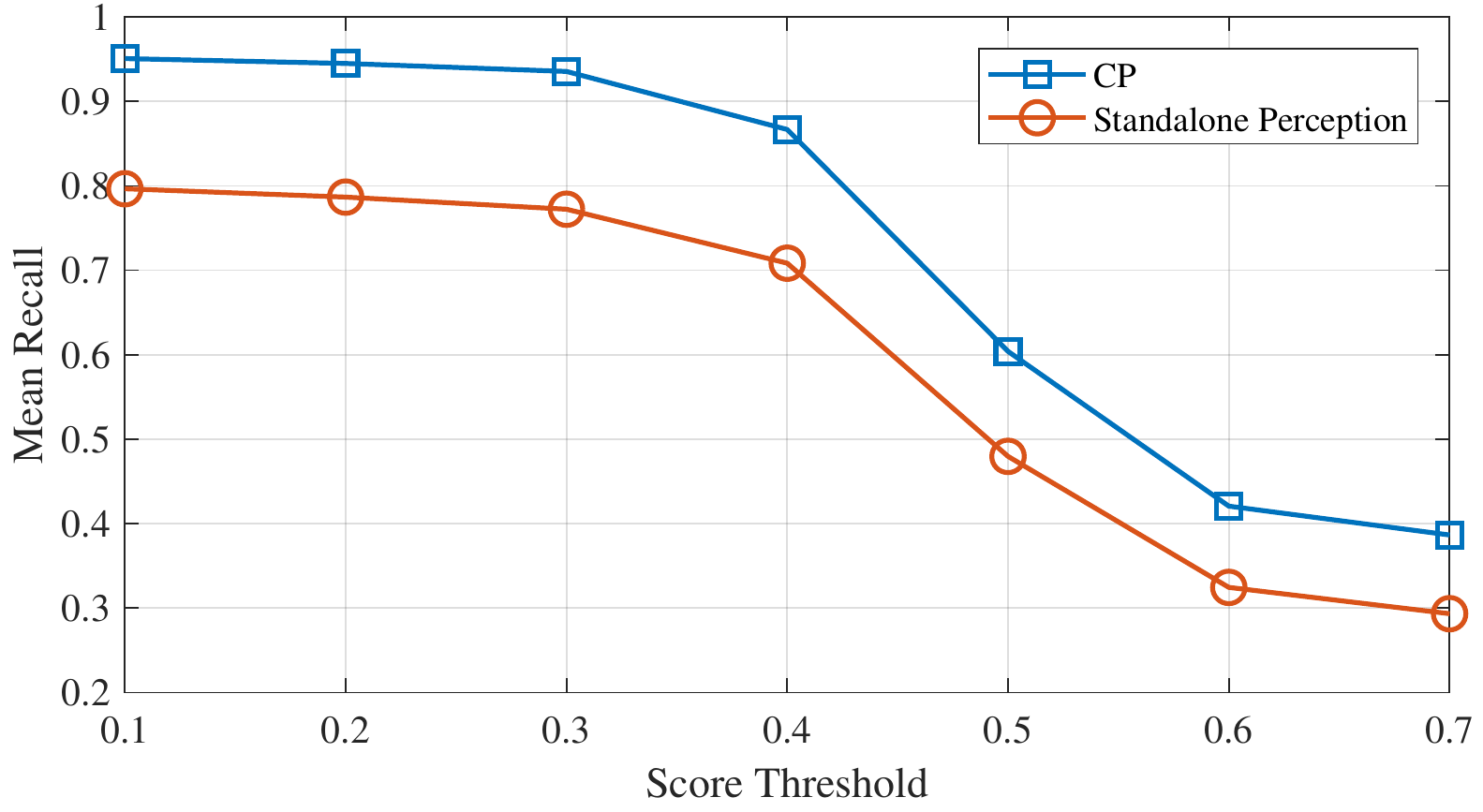}
    \label{fig:recall-score}}
    \hfill
    \subfloat[]{\includegraphics[width=0.4\textwidth]{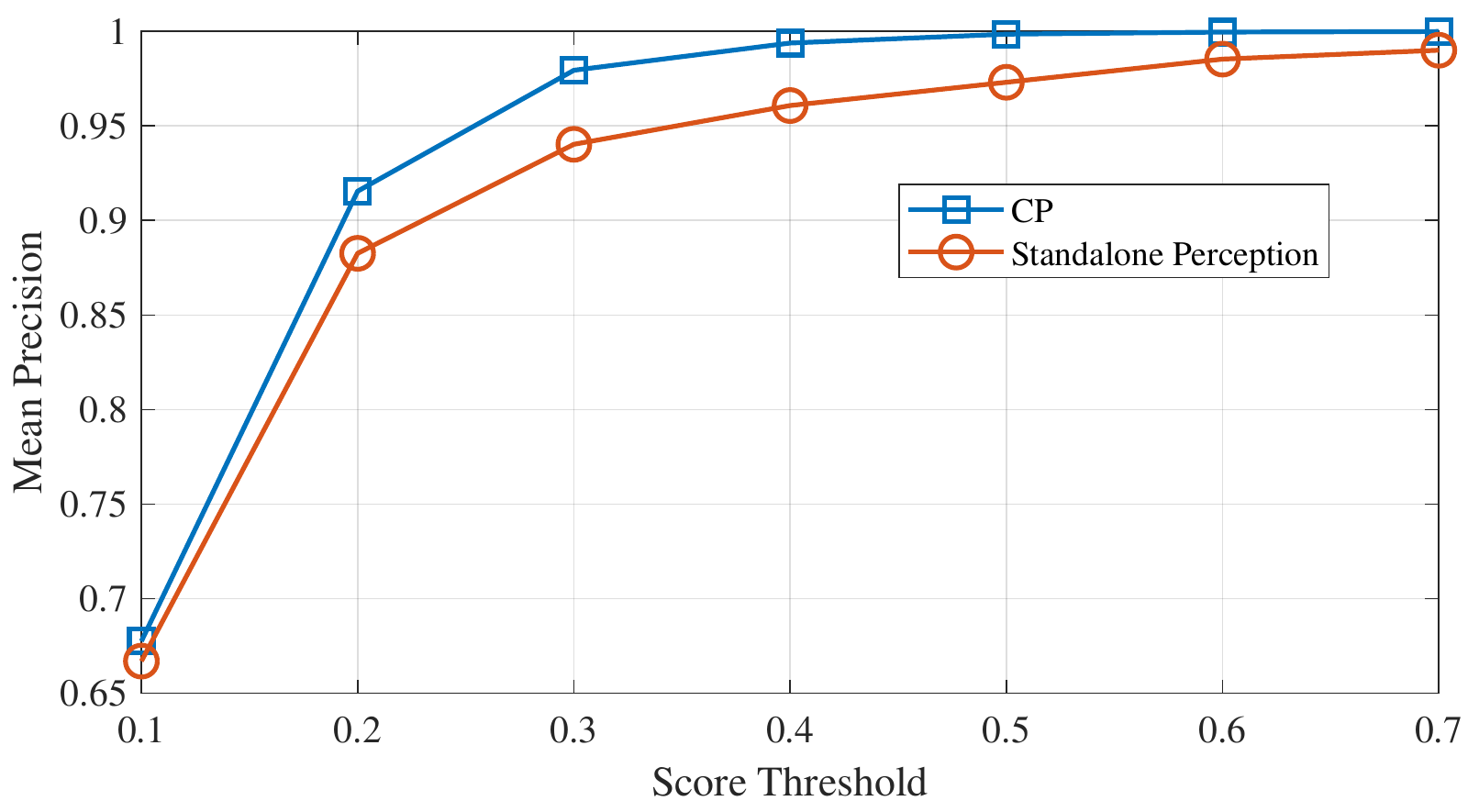}
    \label{fig:precision-score}}
    \caption{Performances of CP over standalone perception at different score thresholds. (a) The mean recall. (b) The mean precision.}
    \label{fig:score}
\end{figure}

We run the detection task on the test set and obtain mean recall and precision results over two categories, i.e., cars and pedestrians.
The score threshold for a positive detection sweeps from 0.1 to 0.7.
As shown in Fig. \ref{fig:recall-score}, the recall is significantly improved by the supplementary view, 
which shows the great potential of CP.
Besides, the precision, defined as the ratio of true detections to all detections, is slightly increased as well in Fig. \ref{fig:precision-score}.
Our evaluation of the perception gain depends on high precision since the gain is calculated based on the additional detections from CP.
Moreover, it is reasonable to observe the trade-off that when the score threshold increases, the precision enhances while the recall degrades.
To strike a balance between recall and precision, in the following experiments, the score threshold is set to 0.4.
With mean precision as high as 0.99 for CP, we can safely approximate the perception gain from the newly detected objects, neglecting the false positives.

\begin{figure}[!t]
	\centering
	\includegraphics[width=0.4\textwidth]{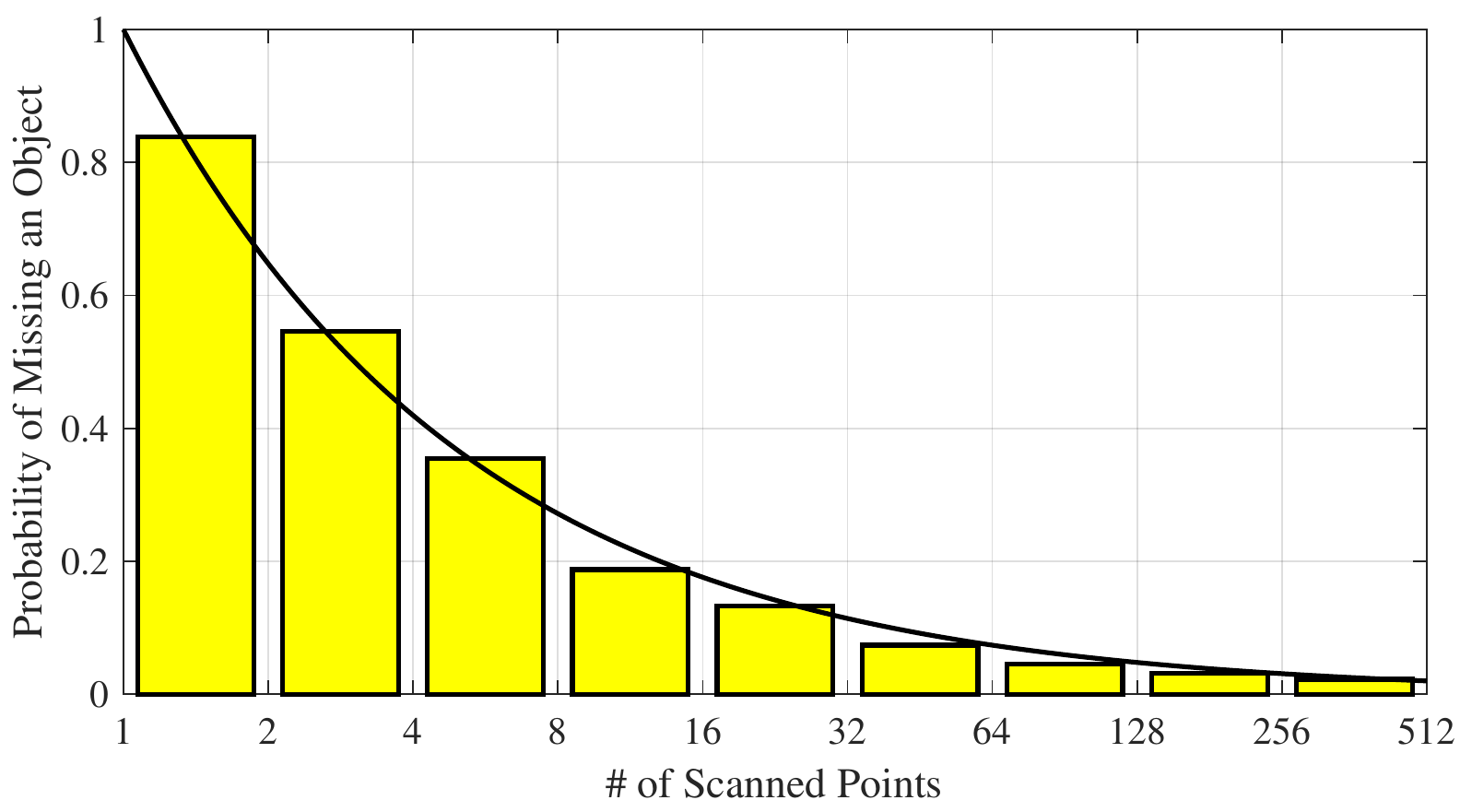}
	\caption{The empirical probability of missing an object with respect to the number of scanned points (in log scale).}
	\label{fig:detprob}
\end{figure}

Now turn to the relationship between the detection result and the scanned LiDAR points of an object.
As shown in Fig. \ref{fig:detprob}, the ground-truth objects are binned based on the number of points within the labeled 3D bounding boxes in log scale. 
Then we fit the missed detection probability to the exponential distribution and obtain the best fit with goodness $R^2=0.994$.
The statistics implies that the empirical probability of missing an object is approximately a power function of the number of scanned points on the object, i.e.,
\begin{align}
    \label{formula:fit}
    P\left(\Phi\left(\mathcal{X}_{0,j}^{(t)}\right)=1\right) 
    = e^{-0.4343 \log_2{N_\mathcal{X}}} 
    = N_\mathcal{X}^{-0.6265},
 \end{align}
where $N_\mathcal{X}$ denotes the number of points on an object.

Based on this observation, in the simulation of the following subsection, we will assume the detection result of an object is determined by the number of scanned points, neglecting other factors. 
Therefore, for each object $j\in\mathcal{O}_t$, a minimum number of scanned points $N_j$ are required for accurate detection, representing the \emph{difficulty} of the object.
This assumption is reasonable since a certain number of points are needed to exhibit the texture and shape information of a particular object.
By (\ref{formula:fit}), the object difficulty $N_j$ follows a long-tail zeta random distribution with cumulative distribution function
\begin{align}
    F_N(n) = n^{-0.6265}.
\end{align}
Then with CP, the total number of points on an object is increased, thus improving the chance of accurate detection.


\subsection{Simulation with Detection Model}
\label{sect:exp-sumo}

\begin{figure}[!t]
	\centering
	\includegraphics[width=0.4\textwidth]{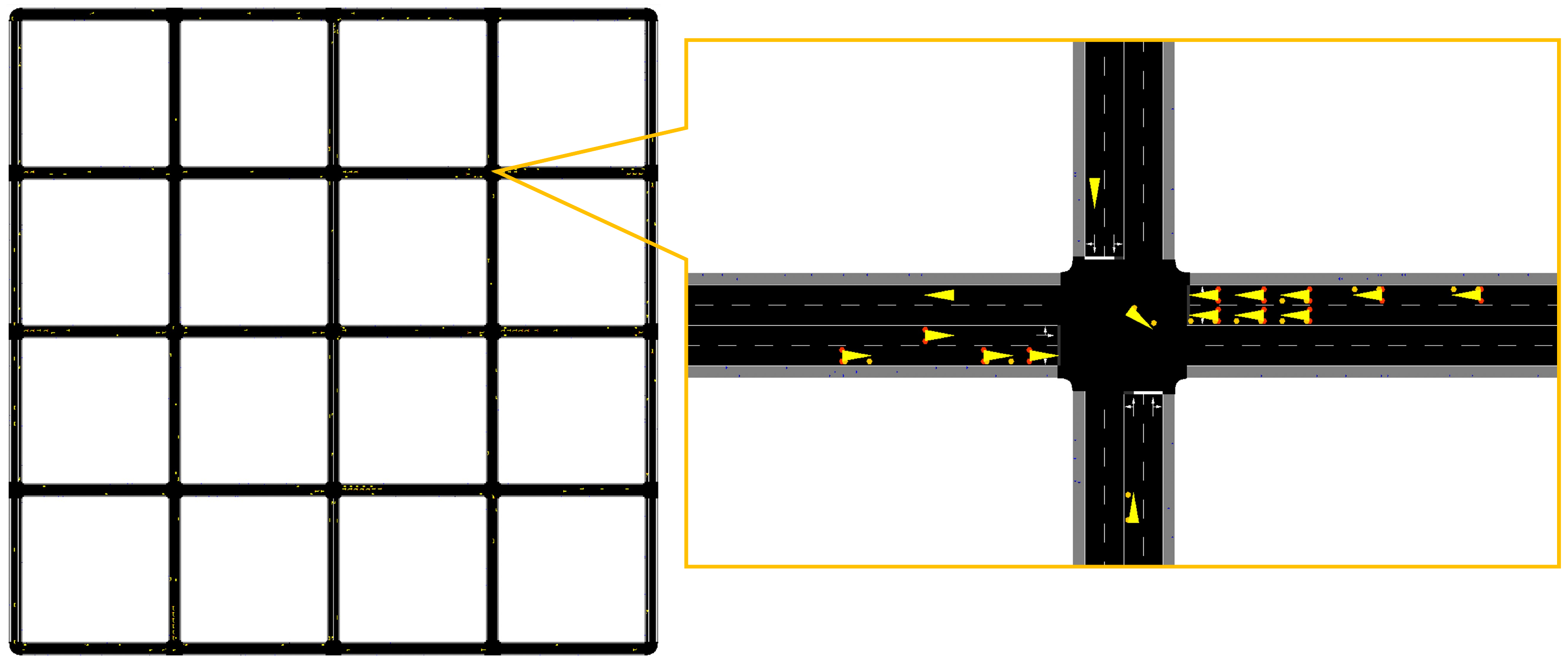}
	\caption{An illustration of the map created in SUMO with a zoomed snapshot at an intersection.}
	\label{fig:sumo}
\end{figure}

We first generate mobility traces in BEV using the microscopic traffic simulator SUMO \cite{sumo}, highlighting the sensor coverage and occlusion relationships.
A Manhattan-like map is created with a 4-by-4 grid, as shown in Fig. \ref{fig:sumo}.
The street is bidirectional, with two lanes and a sidewalk in each direction, and the side length of a block is $200$ meters.
The blocks represent the space for buildings, which occludes the sight of perpendicular directions, causing blind zones.
The traffic flow is controlled by the traffic lights at intersections, leading to queues of cars waiting to enter the intersection as well as some occlusions.
While the total number of cars is fixed, pedestrians are spawned randomly at the endpoints of each sidewalk and move towards the other endpoint as destinations. 
Based on the empirical modeling of perception in the subsection above, we randomly assign the difficulties, the minimum number of LiDAR points for correct detection, to each car and pedestrian.

We randomly select one vehicle as the ego vehicle, which aims to perceive the other traffic participants within 100m using CP.
The importance weight of objects is calculated by 
\begin{align}
    w_j^{(t)} = \begin{cases}
        1, &\quad d_j^{(t)} \le 10, \\
        2 - \log_{10}{d_j^{(t)}}, &\quad 10 < d_j^{(t)} < 100, \\
        0, &\quad d_j^{(t)} \ge 100,
    \end{cases}
\end{align}
where $d_j^{(t)}$ is the distance between the object $j$ and the ego vehicle.
Define \emph{CoV ratio} as the proportion of CoVs with sensor sharing functionality among vehicles on the road.
Each CoV, including the ego vehicle, is equipped with an omni-directional LiDAR on the top.
We simulate the laser scanning process in the experiment, considering the blockage effect of vehicles and buildings.
During the trip, the CoVs within 100 meters from the ego vehicle are identified as candidates since they are more likely to help reveal important objects.

Due to the decentralized congestion control (DCC) mechanism of the V2X network \cite{dcc}, the LiDAR point clouds are down-sampled when the communication bandwidth is inadequate.
For each CoV, the evolution of available communication resource ratio $\eta_i(t)$ is modeled by independent Markov chains with three states \cite{dcc-states}.
Besides, we adopt the V2V sidelink channel models in 3GPP TR 37.885 \cite{3gpp37885}, which introduces the NLOSv state, in which the direct path is blocked by vehicles.
Unlike the conventional NLOS channel that assumes blockage by larger objects such as buildings, the behavior of NLOSv channel is closer to a LOS channel with extra attenuation.
In the urban setting, the pathloss of the LOS and the NLOSv channels are specified by 
\begin{align}
    PL_\mathrm{LOS} = 38.77+16.7\log_{10}{d}+18.2\log_{10}{f_c},
\end{align}
where the NLoSv channel adds an extra blockage loss for each vehicle.
On the other hand, the pathloss of the NLOS channel is given by
\begin{align}
    PL_\mathrm{NLOS} = 36.85+30\log_{10}{d}+18.9\log_{10}{f_c}.
\end{align}
The simulation parameters are summarized in Table \ref{tab:param}.
We simulate for $T=10^4$ time slots, corresponding to a trip of 1,000 seconds.
An exemplary sample path of the perception gains is plotted in Fig. \ref{fig:samplepath}.

\begin{table}[!t]
\centering
\caption{Simulation Parameters}
\begin{tabular}{|ll|}
\hline 
\multicolumn{1}{|l|}{\textbf{Parameters}}                   & \textbf{Values}           \\ \hline\hline
\multicolumn{1}{|l|}{Length of Time Slot}              & 0.1s                            \\ \hline
\multicolumn{1}{|l|}{Number of Cars}              & 200                               \\ \hline
\multicolumn{1}{|l|}{CoV Ratio}              & 30\%                                    \\ \hline
\multicolumn{1}{|l|}{Speed Limit of Cars}              & 50km/h                               \\ \hline
\multicolumn{1}{|l|}{Turning Probabilities}             & 0.25 (Left), 0.25 (Right) \\ \hline
\multicolumn{1}{|l|}{Arrival Rate of Pedestrians}              & 0.2 persons/s                 \\ \hline
\multicolumn{1}{|l|}{Speed of Pedestrians}              & 1.2m/s                 \\ \hline
\multicolumn{2}{|c|}{\textbf{LiDAR-Related}}                                               \\ \hline
\multicolumn{1}{|l|}{\# of Lasers}                          & 16, 32, 64                        \\ \hline
\multicolumn{1}{|l|}{Vertical Field-of-view}             & 26.8$^{\circ}$                              \\ \hline
\multicolumn{1}{|l|}{Maximum Range}                     & 100m                               \\ \hline
\multicolumn{1}{|l|}{Height of Objects}                 & 1.7m                               \\ \hline
\multicolumn{1}{|l|}{Angular Resolution}    & 0.09$^{\circ}$                              \\ \hline
\multicolumn{1}{|l|}{Data Rate (64 channels)}         & 33.27Mbps                              \\ \hline
\multicolumn{2}{|c|}{\textbf{V2X-Related}}                                                      \\ \hline
\multicolumn{1}{|l|}{Carrier Frequency}                     & 5.9GHz                           \\ \hline
\multicolumn{1}{|l|}{Transmission Power}              & 23dBm                                \\ \hline
\multicolumn{1}{|l|}{Noise Power Spectral Density}          & -174dBm/Hz                              \\ \hline
\multicolumn{1}{|l|}{Receiver Noise Figure}            & 9dB                                 \\ \hline
\multicolumn{1}{|l|}{Shadowing Fading Std. Dev.}       & 3dB (LOS, NLOSv), 4dB (NLOS)         \\ \hline
\multicolumn{1}{|l|}{Vehicle Blockage Loss}            & max\{0, $\mathcal{N}(5,4)$\} dB     \\ \hline
\multicolumn{1}{|l|}{Channel Bandwidth}               & 30MHz                                \\ \hline
\multicolumn{1}{|l|}{Available Comm. Resource Ratio}            & 1.2MHz, 6MHz, 30MHz                  \\ \hline
\multicolumn{1}{|l|}{Transition Time of Resource Ratio}      & 10s                                \\ \hline
\end{tabular}
\label{tab:param}
\end{table}

\begin{figure}[!t]
	\centering
	\includegraphics[width=0.4\textwidth]{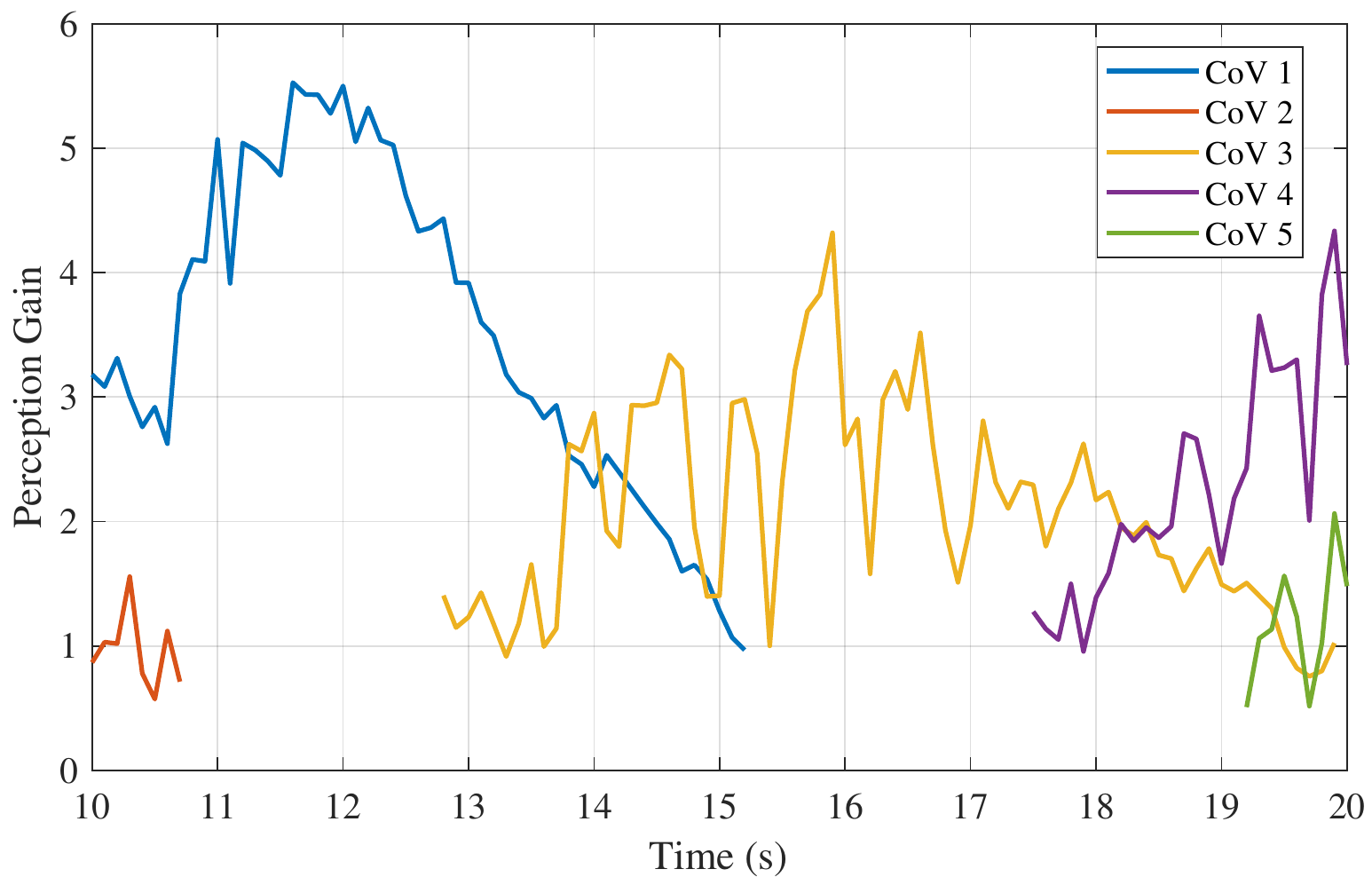}
	\caption{A sample path on the evolution of perception gains of nearby CoVs. The start point of a curve represents that a CoV is approaching the ego vehicle and becomes a candidate. The end point represents that the CoV is no longer a candidate.}
	\label{fig:samplepath}
\end{figure}

In the following, we compare the proposed MASS algorithm to four baselines:
1) \textbf{Closest CoV} is a naive policy without regard to historical observations.  
It is supported by the fact that the closest CoV usually has a good viewpoint for closer objects which have greater importance weights, and the pathloss is minimum.
2) In \textbf{Periodic ETC} (Periodic Explore-Then-Commit), the time is divided into epochs of equal length.
At the beginning of an epoch, each candidate is explored once, then for the rest of the epoch, the CoV with the maximum empirical perception gain is scheduled.
3) \textbf{SW-UCB} (Sliding Window UCB) \cite{sw-ucb} is adapted from the classic UCB algorithm, using the averaged observed rewards on a fixed-size horizon rather than the infinite horizon. 
4) \textbf{Earliest Activated} \cite{restless} is another online algorithm for restless bandit problem.
It explores the earliest activated CoV in odd-numbered time slots and exploits the leader in even-numbered time slots.
For a fair comparison, the algorithms are evaluated with sweeping parameters, summarized in Table \ref{tab:parameter}.

\begin{table}[!t]
\centering
\caption{Sweeping Parameters in Algorithms}
\label{tab:parameter}
\begin{tabular}{|l|l|l|}
\hline
\textbf{Algorithm}      & \textbf{Parameter} & \textbf{Range of Value}    \\ \hline \hline
Closest CoV             & -              & -                          \\ \hline
Periodic ETC            & Epoch Length   & \{2,3,...101\}             \\ \hline                           
\multirow{2}{*}{SW-UCB} & Horizon Length & \{5,10,20,30,40\}         \\ \cline{2-3}
                        & Scale of UCB $\beta$           & {$[10^{-1}, 10^1]$} (log scale) \\ \hline
Earliest Activation     & Scale of UCB $\beta$           & {$[10^{-1}, 10^{0.5}]$} (log scale) \\ \hline
MASS   & Scale of UCB $\beta$           & {$[10^{-0.9}, 10^{0.6}]$} (log scale) \\ \hline
\end{tabular}
\end{table}

\begin{figure}[!t]
	\centering
	\includegraphics[width=0.4\textwidth]{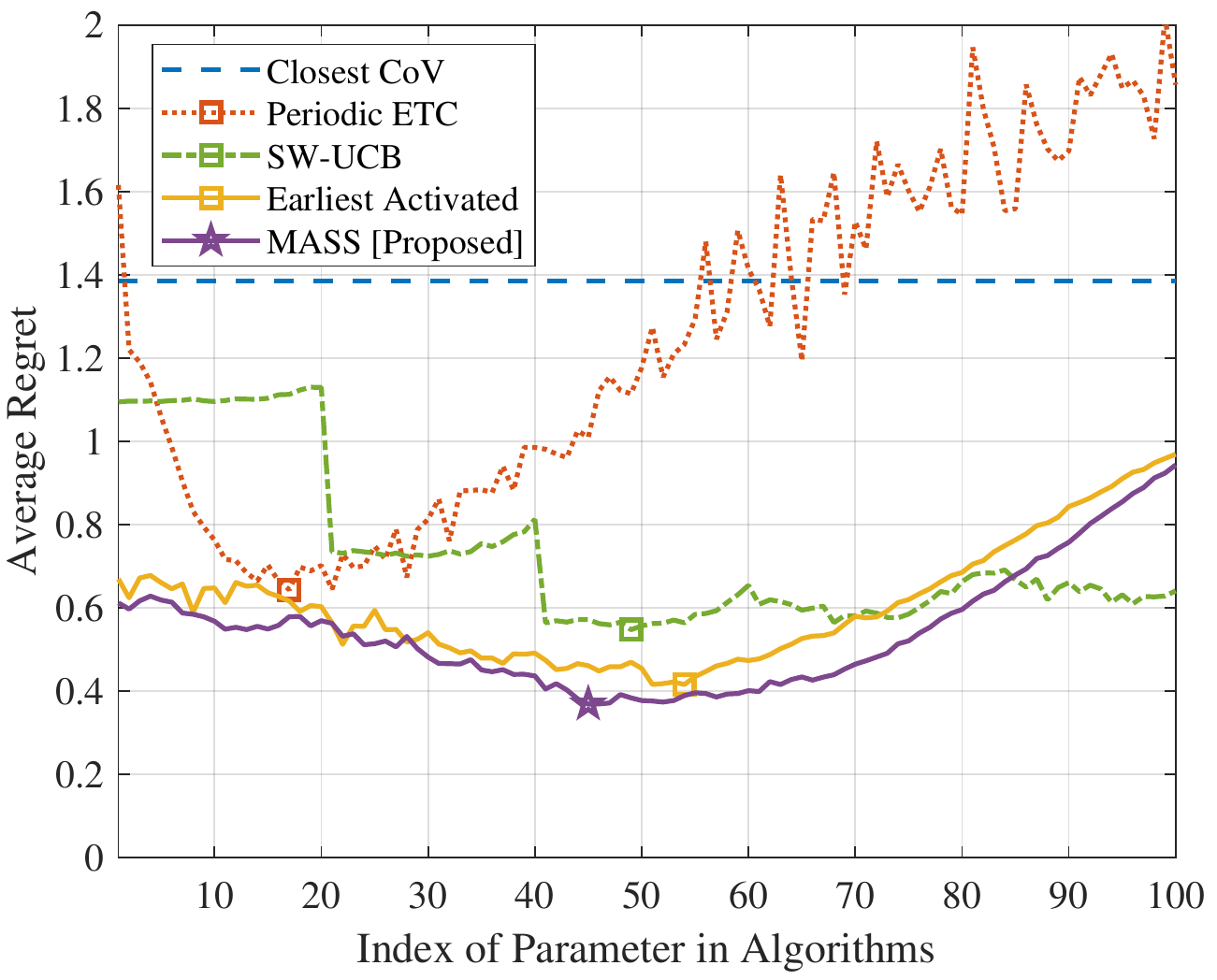}
	\caption{The average regret of different scheduling policies with sweeping algorithm parameters. The ranges of parameters for each algorithm are specified in Table \ref{tab:parameter}. The optimal regrets of scheduling policies are marked on the figure.}
	\label{fig:parameter}
\end{figure}

Fig. \ref{fig:parameter} shows the average learning regret for sweeping algorithm parameters.
The average learning regret reflects the perception cost difference to the offline optimal decision across time.
With reasonable parameters, all the learning-based algorithms, including SW-UCB, Earliest Activated, and the proposed MASS algorithm, outperform the distance-based policy, showing the benefit of learning from historical observations. 
Among the learning-based algorithms, MASS achieves a uniformly better regret performance than other algorithms in a wide range of parameter values.
Furthermore, a well-tuned parameter can minimize the average regret, shown by the markers on the curves.
The optimal parameter of the MASS algorithm is influenced by the rate of change in the perception gain, as stated in Section \ref{sect:performance}.

\begin{figure}[!t]
    \centering
    \subfloat[]{\includegraphics[width=0.4\textwidth]{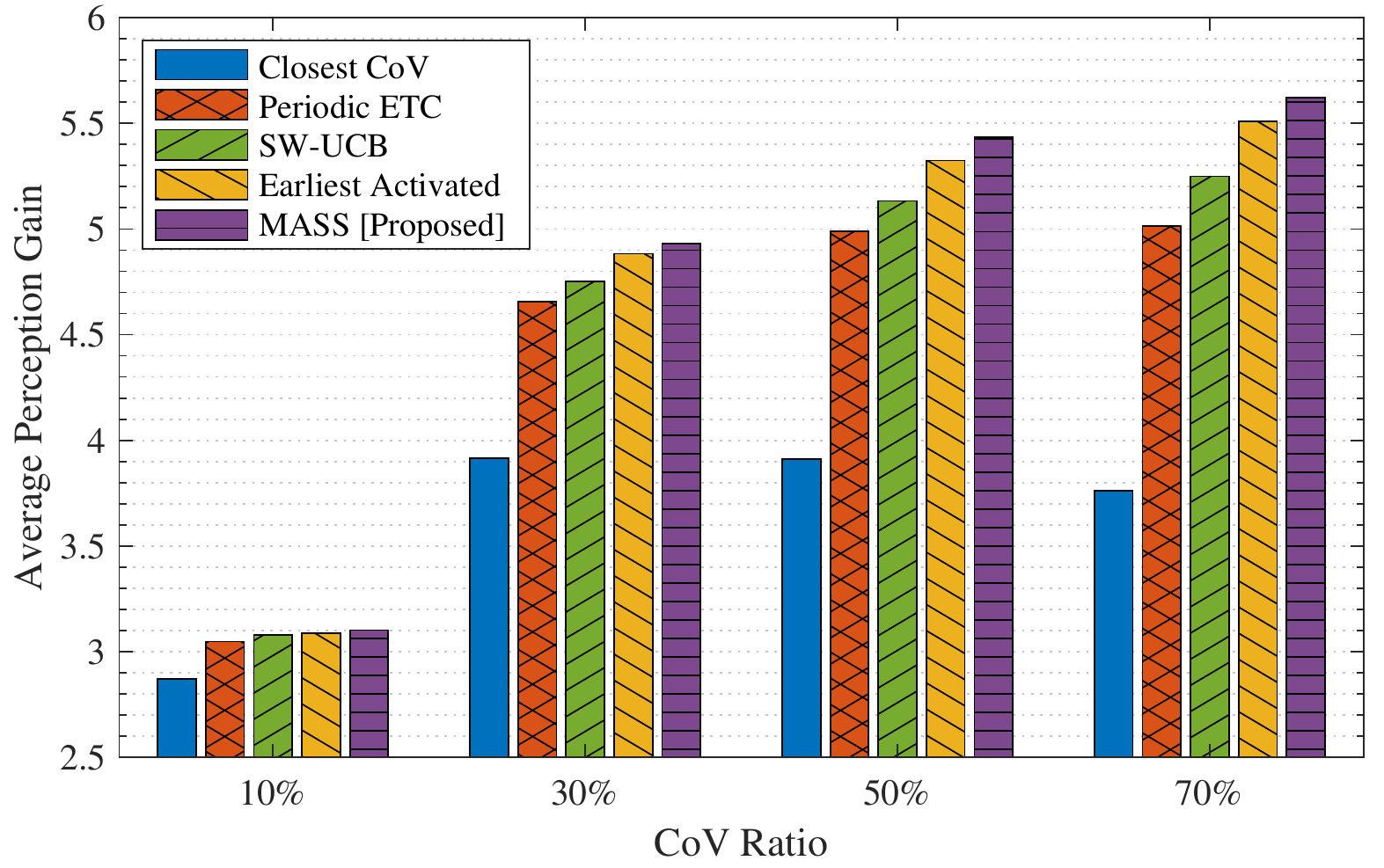}
    \label{fig:mpr-gain}}
    \hfill
    \subfloat[]{\includegraphics[width=0.4\textwidth]{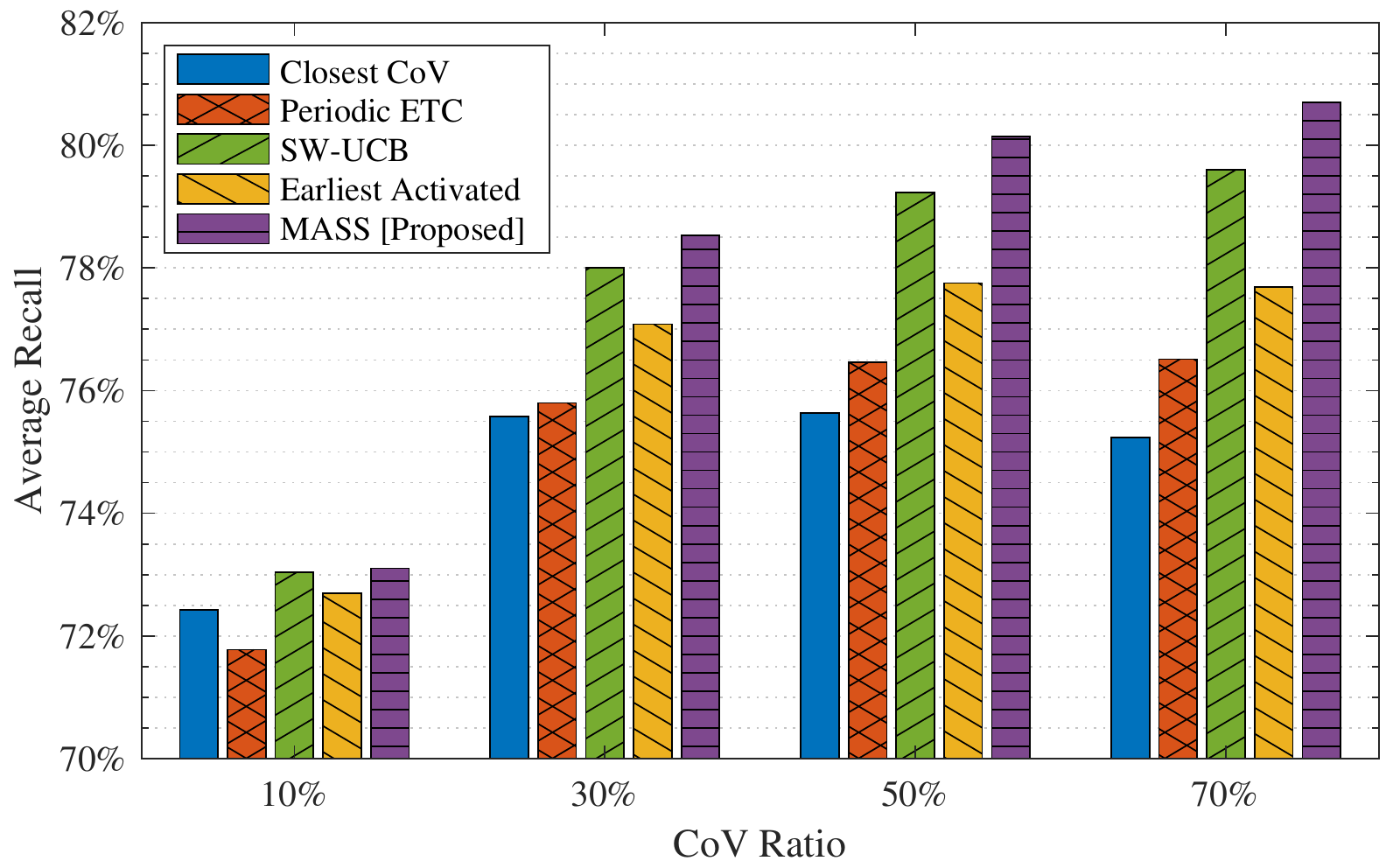}
    \label{fig:mpr-recall}}
    \caption{Comparisons of scheduling algorithms under different CoV ratios. a) The perception gain. b) The recall value.}
    \label{fig:mpr}
\end{figure}

Fig. \ref{fig:mpr} compares the average perception gain of scheduling policies at optimal parameters during the trip under different CoV ratios.
We make three observations as follows.
First, the perception gain generally increases when the CoV ratio is higher.
Typically, when there are more candidate CoVs, the gain from the optimal CoV is higher, and thus the performance is improved when the optimal CoV is exploited. 
With different CoV ratios, the MASS algorithm has stable optimal parameters around $\beta\approx0.6$, showing its robustness to the number of candidates.
Second, the MASS algorithm performs uniformly the best with all CoV ratios. 
The perception gain is improved by up to 49\% compared to position-based policy and 12\% compared to other learning-based algorithms in high CoV ratio settings.
The advantage over SW-UCB implies an essential finding that dynamics is more significant than randomness due to the high-mobility nature of the automated driving scenario.
In contrast to the Periodic ETC algorithm that explores regularly, MASS explores more efficiently by adapting to the actual state.
Finally, although the optimization objective is not precisely aligned to the recall, the MASS algorithm achieves the best perception performance.
The recall is improved by up to 4.2 percentage points compared to other learning-based algorithms in the high CoV ratio setting, which is a considerable gain in the context of automated driving.


\subsection{Case Study: A Trace with LiDAR Frames}

\begin{figure}[!t]
	\centering
	\includegraphics[width=0.4\textwidth]{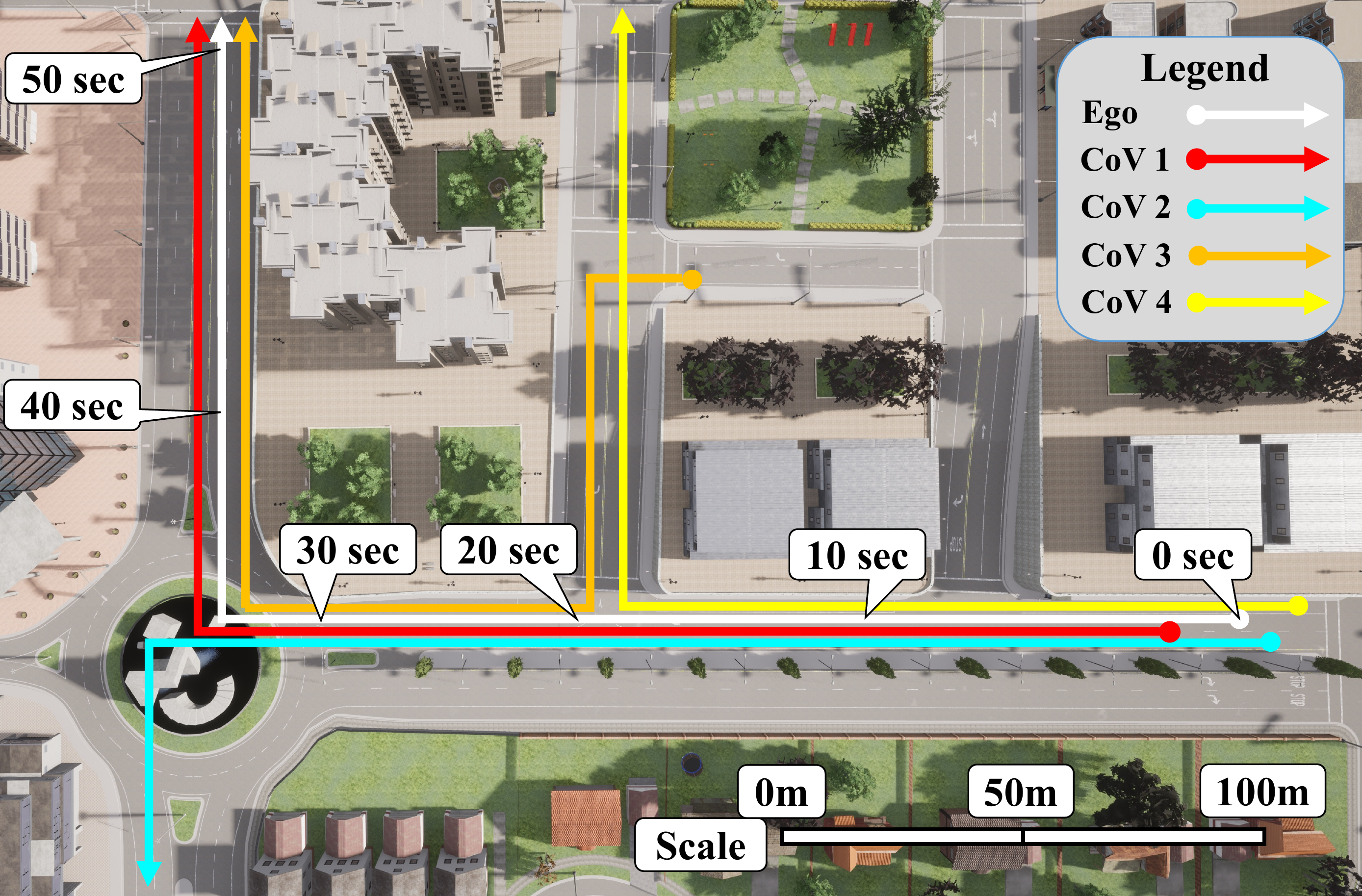}
	\caption{The trajectories of the ego vehicle and other CoVs on the CARLA town map. The position of the ego vehicle is marked every 10 seconds.}
	\label{fig:trace-map}
\end{figure}

To further illustrate the benefits of adaptive exploration, we conduct a case study with a CARLA-generated trace of LiDAR frames.
For a hundred frames, the LiDAR point clouds from the ego vehicle and four other CoVs are simultaneously recorded during a trip of 50 seconds.
The trajectories of the vehicles on the town map are shown in Fig. \ref{fig:trace-map}.
In each time slot, the ego vehicle identifies the CoVs within 100 meters as candidates and selects a CoV to schedule.
For sensor fusion, the received point clouds are transformed into the view of the ego vehicle and then appended together with the onboard sensor data. 
Then it is feed into the 3D object detector, PointPillars, to obtain the perception results.
Several representative algorithms are tested, including the one-shot distance-based Closest CoV, the regularly exploring Periodic ETC, and the adaptively exploring MASS algorithm.
The standalone perception and CP with the offline optimal CoV serve as the upper and lower bounds for the perception cost, respectively.

\begin{figure}[!t]
	\centering
	\includegraphics[width=0.4\textwidth]{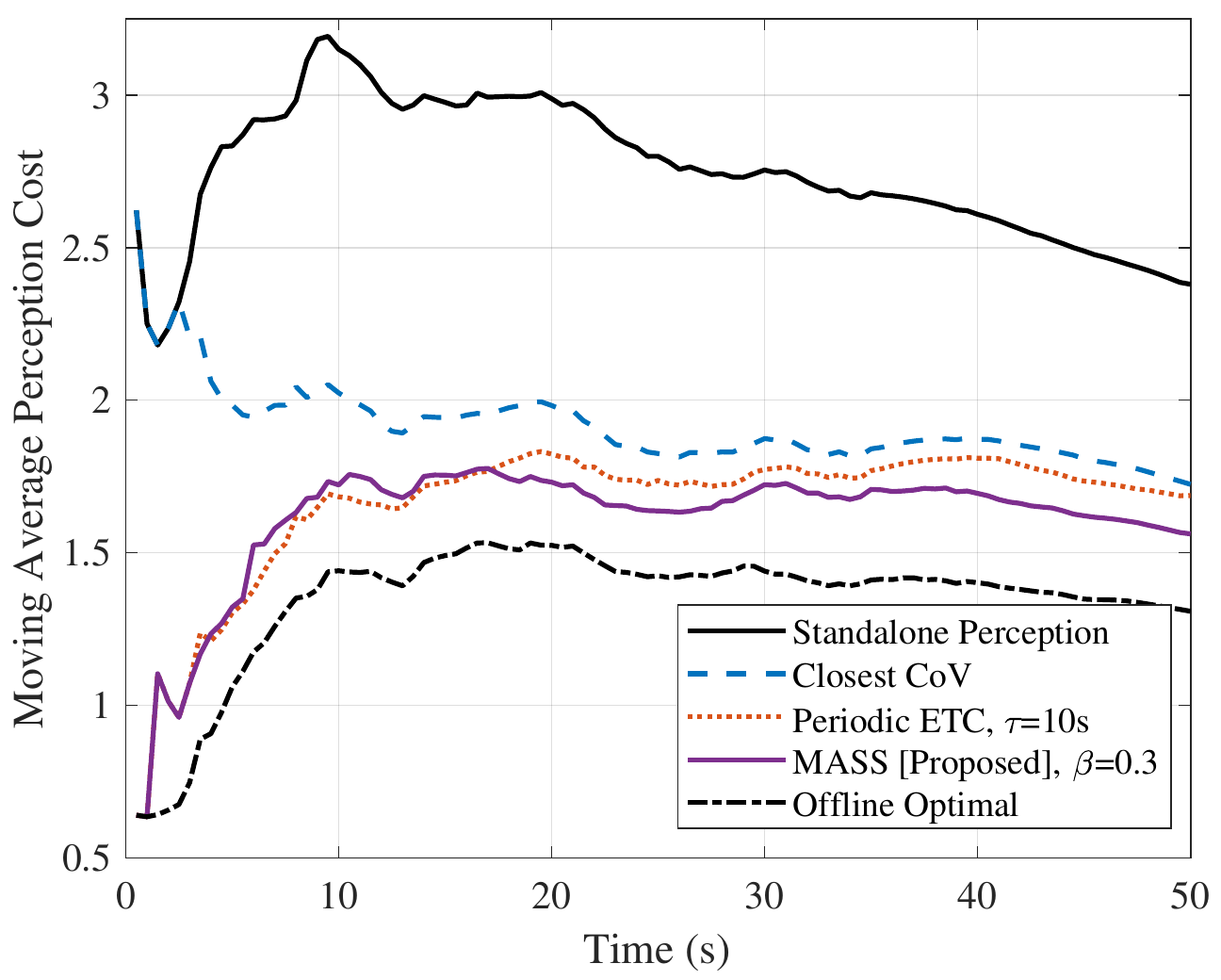}
	\caption{The comparison of moving average perception costs of different scheduling algorithms.}
	\label{fig:trace-cost}
\end{figure}

As shown in Fig. \ref{fig:trace-cost}, the MASS algorithm achieves the best perception quality, consistent with the simulation results.
We further visualize the merged point clouds and detection results, as illustrated in Fig. \ref{fig:lidar}.
At around 38.0s, the ego vehicle is leaving the roundabout, when two cars and one pedestrian are invisible due to the blockage effect.
The CoV 1 in front is scheduled with the Closest CoV policy and the Periodic ETC in the exploitation phase.
However, as shown in Fig. \ref{fig:lidar}(a), there is no traffic in front, and thus the extra sensor has no gain.
Owing to the \emph{adaptive exploration}, the MASS algorithm is aware of the decrease in the leader's gain.
Therefore, the ego vehicle explores and quickly identifies CoV 2 as the new leader since it reveals three additional objects in Fig. \ref{fig:lidar}(b).
Such a process takes full advantage of the information within the historical detection results and learns to make optimal decisions, which is the core intuition of our algorithm.

\begin{figure}[!t]
    \centering
    \subfloat[]{\includegraphics[width=0.4\textwidth]{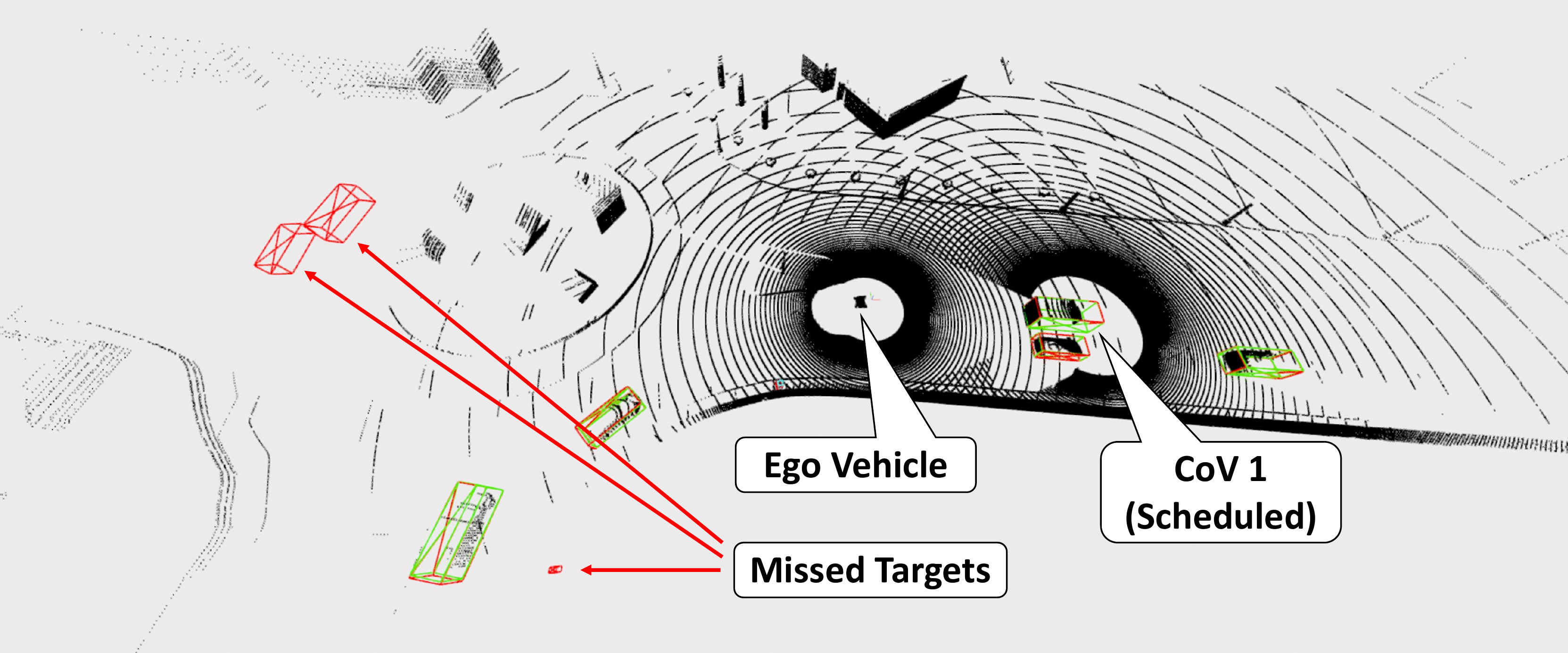}
    \label{fig:lidar-subopt}}
    \hfill
    \subfloat[]{\includegraphics[width=0.4\textwidth]{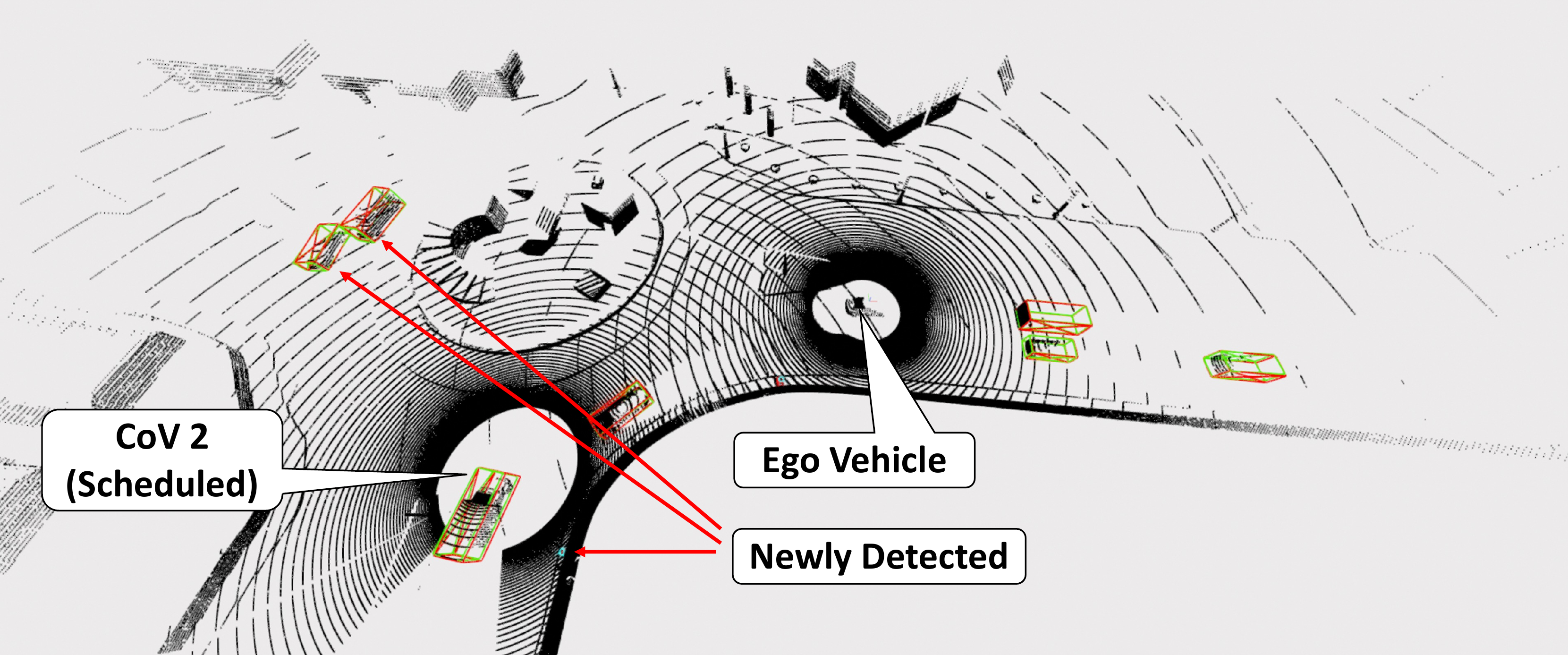}
    \label{fig:lidar-best}}
    \caption{Visualization of the merged point clouds and detection results. (a) The merged point clouds when CoV 1 is scheduled by Closest CoV and Periodic ETC algorithms. (b) The merged point clouds when CoV 2 is scheduled by the proposed MASS algorithm.}
    \label{fig:lidar}
\end{figure}

\section{Conclusion} \label{sect:conclusion}
In this paper, we have studied the scheduling of decentralized cooperative perception within the RMAB framework, fully considering the mobility of CoVs.
A mobility-aware sensor scheduling (MASS) algorithm has been proposed to maximize the average perception gain, leveraging the temporal continuity of perception gains. 
The MASS algorithm enables each CoV to learn the perception gains from candidates in a distributed manner, without the overhead of frequent meta-information exchanges.
An upper bound for the expected average learning regret is provided that matches the lower bound of any online algorithm up to a logarithmic factor.
We have evaluated the MASS algorithm under a realistic SUMO trace, showing that the proposed algorithm can improve the average perception gain by up to 12\% and the recall by up to 4.2 percentage points, compared to other learning-based algorithms.
Finally, a case study has been presented to show the superiority of adaptive exploration from our proposed algorithm.

For future work, we plan to extend the sensor scheduling problem towards more sources, using the combinatorial multi-armed bandit framework \cite{replication}.
Note that the reward is then non-linear since the perception gains of different sources are not independent.
Besides, we will study the uncertainty reduction modeling of the perception gain \cite{uncertainty} to incorporate the influence of false positives in the perception performance.


%

\appendices
\section{Proof of Lemma \ref{lem:wellbehave}} \label{app:wellbehave}
Define the event 
    \begin{align*}
        E_{t,t'}^{(i)} = \left\{|G_i(t)-G_i(t')| \le c_f\sqrt{|t-t'|}\sigma\right\}.
    \end{align*}
    By assumption, $G_i(t)-G_i(t') \sim \mathcal{N}(0,|t-t'|\sigma^2)$, thus
    \begin{align*}
        P(E_{t,t'}^{(i)}) &= 1 - 2 \int_{c_f}^{\infty} \frac{1}{2\pi} e^{-x^2/2} dx \\
        &= 1 - O\left(\log{\sigma^{-1}}\right)^{-1/2} e^{-\frac{9}{2}\log{\sigma^{-1}}} \ge 1 - O(\sigma^{4.4}).
    \end{align*}
    Then
    \begin{align*}
        P(E_t)
        &\ge \prod_{i=1}^2 \prod_{t'=t-\sigma^{-2}}^{t+\sigma^{-2}} P(E_{t,t'}^{(i)})
        > 1 - O(\sigma^{2.4}).
    \end{align*}
    Therefore, $P(\Bar{E}_t) < O(\sigma^{2.4})$.

\section{Proof of Lemma \ref{lem:properties}} \label{app:properties}
In the proposed algorithm, the last-seen time $\tau_i(t)$ must exist and satisfies $t - \tau_i(t) < \sigma^{-2}$.
For simplicity, we denote $\tau_i = \tau_i(t)$ when there is no confusion.

a) Without loss of generality, we assume CoV 1 is optimal at $t$. 
If CoV 2 is scheduled at time $t$, then at $t+1$,
\begin{align*}
    G_1(\tau_1)+\beta\sigma\sqrt{t-\tau_1+1} &\ge G_1(t)-c_f\sigma\sqrt{t-\tau_1} \\ 
    &\quad + \beta\sigma\sqrt{t-\tau_1+1} \\ 
    &> G_2(t)+\beta\sigma.
\end{align*}
Therefore, CoV 1 must be scheduled no later than $t+1$.

b) Without loss of generality, we assume CoV 1 is the leader at $t$.
If CoV 1 is not scheduled at time $t-1$, then
\begin{align*}
    G_1(\tau_1)+\beta\sqrt{t-\tau_1} \ge G_2(t-1)+\beta\sigma,
\end{align*}
since $G_1(\tau_1) \ge G_2(t-1)$, and the proof is completed.
Otherwise, we have CoV 1 scheduled at $t-1$ and CoV 2 scheduled at $t$, then
\begin{align}
    G_1(\tau_1)+\beta\sigma\sqrt{t-\tau_1-1} \ge G_2(\tau_2)+\beta\sigma\sqrt{t-\tau_2-1}. \label{formula:lemma2b}
\end{align}
i) If $\tau_1=t-2$, $\tau_2 \le t-3$, then (\ref{formula:lemma2b}) implies
\begin{align*}
    G_1(t-2)+\beta\sigma \ge G_2(\tau_2)+\beta\sigma\sqrt{t-\tau_2-1}.
\end{align*}
Therefore, at time $t+1$,
\begin{align*}
    G_1(t-1)+\sqrt{2}\beta\sigma &\ge G_1(t-2)+c_f\sigma+\sqrt{2}\beta\sigma \\
    &> G_2(t)+\beta\sigma. 
\end{align*}
ii) If $\tau_2=t-2$, $\tau_1 \le t-3$, then at time $t+1$,
\begin{align*}
    G_1(t-1)+\sqrt{2}\beta\sigma &\ge G_2(t-2) + \sqrt{2}\beta\sigma > G_2(t) + \beta\sigma.
\end{align*}
To sum up, CoV 1 is scheduled no later than $t+1$.

c) The statement is trivially true if the leader stays the same from $t$ to $t+1$.
Otherwise, Let CoV 1 be the leader at $t+1$, CoV 2 be the leader at $t$,
$H^*(t+1)-H^*(t) = G_1(t+1)-G_2(t)$. \\
i) Assume CoV 1 is scheduled at $t$, then CoV 2 must be scheduled at $t-1$.
Therefore,
\begin{align*}
    G_1(t+1)-G_2(t) &\ge G_1(t+1)-G_2(t-1)-c_f\sigma \\
    &\ge G_1(t+1)-G_1(t)-c_f\sigma \ge -2c_f\sigma.
\end{align*}
ii) Assume CoV 2 is scheduled at $t$, we have
\begin{align}
    G_1(\tau_1)+\sqrt{t-\tau_1}\beta\sigma \le G_2(\tau_2)+\sqrt{t-\tau_2}\beta\sigma. \label{formula:lemma2c}
\end{align}
If CoV 1 is scheduled at $t-1$, then
\begin{align*}
    G_1(t+1)-G_2(t) \ge G_1(t+1)-G_1(t-1) \ge -\sqrt{2}c_f\sigma.
\end{align*}
If CoV 2 is scheduled at $t-1$, then
\begin{align*}
    G_1(t+1)-G_2(t) &\ge G_1(\tau_1)-\sqrt{t-\tau_1+1}c_f\sigma-G_2(t) \\
    &> \frac{c_f+\beta}{\beta}(G_1(\tau_1)-G_2(t)) - 2c_f\sigma \\
    &\ge -2c_f\sigma,
\end{align*}
where the second inequality is by (\ref{formula:lemma2c}).

d) Let CoV 1 be the optimal CoV at $t-2$, which is scheduled at $t'\in\{t-2,t-1\}$.
Then either $G_1(t') \le H^*(t')$ or CoV 1 becomes the new leader at $t'+1$.
Either of them leads to
\begin{align*}
    H^*(t'+1) \ge G_1(t')-c_f\sigma.
\end{align*}
If $t'=t-2$, by c) we have 
\begin{align*}
    H^*(t) &\ge H^*(t-1) - 2c_f\sigma \ge G_i(t-2) - 3c_f\sigma  \\
    &= G^*(t-2) - 3c_f\sigma > G^*(t) - 5c_f\sigma.
\end{align*}
If $t'=t-1$,
\begin{align*}
    H^*(t) &= G_i(t-1) - c_f\sigma = G_i(t-2) - 2c_f\sigma \\
    &= G^*(t-2) - 2c_f\sigma > G^*(t) - 4c_f\sigma.
\end{align*}
Therefore, $G^*(t) - H^*(t) < 5c_f\sigma$.

\section{Proof of Lemma \ref{lem:unscheduletime}} \label{app:unscheduletime}
a) We focus on a sub-optimal CoV $i$ and omit the subscript $i$ for simplicity.
Since the problem instance is well-behaved near $t$, for $\forall t'\in[t-\sigma^{-2},t+\sigma^{-2}]$,
\begin{align*}
    |G(t')-G(\tau)| &\le c_f\sqrt{t'-\tau}\sigma, \\
    |H^*(t')-H^*(\tau)| &\le c_f\sqrt{t'-\tau}\sigma + 5c_f\sigma.
\end{align*}
Then
\begin{align*}
    |\delta(t')-\delta(\tau)| \le 2c_f\sqrt{t'-\tau}\sigma + 5c_f\sigma.
\end{align*}
If $\delta(t) \le 20c_f\sigma$, then CoV $i$ is scheduled soon and trivially
\begin{align*}
    t-\tau &= O(1) = \Theta(\delta_i(t)/\beta\sigma)^2, \\
    \delta(\tau) &=O(c_f\sigma) =  \delta(t)+O(c_f\sigma),
\end{align*}
which satisfies the condition.
Otherwise, i) if $\delta(\tau)<\delta(t)/2$, 
\begin{align*}
    2c_f\sqrt{t-\tau}\sigma \ge |\delta(t)-\delta(\tau)|-5c_f\sigma \ge \delta/4,
\end{align*}
and thus $t-\tau\ge\Theta(\delta(t)/c_f\sigma)$.
ii) If $\delta(\tau)\ge\delta(t)/2$, we have for $\forall t' \in [\tau, t+\sigma^{-2}]$,
\begin{align*}
    G(\tau)+\beta\sigma\sqrt{t'-\tau} &= H^*(\tau) - \delta(\tau) + \beta\sigma\sqrt{t'-\tau} \\
    &\le H^*(t') - \delta(t)/2 + (c_f+\beta)\sigma\sqrt{t'-\tau}.
\end{align*}
For any $t'$ satisfying $t'-t \le O(\delta(t)/\beta\sigma)^2$, 
\begin{align*}
    G(\tau)+\beta\sigma\sqrt{t'-\tau} \le H^*(t') + \beta\sigma,
\end{align*}
Therefore, $t-\tau \ge \Theta(\delta(t)/\beta\sigma)$.

During the unscheduled time $t'\in(\tau,t]$, we have
\begin{align*}
    \beta\sigma\sqrt{t'-\tau} &\le H^*(t')+\beta\sigma-G(\tau) \\
    &= \delta(t')+G(t')-G(\tau)+\beta\sigma \\
    &\le \delta(t')+c_f\sigma\sqrt{t'-\tau}+\beta\sigma,
\end{align*}
and consequently $\delta(t')\ge 4c_f\sqrt{t'-\tau}\sigma-\beta\sigma$.
Finally,
\begin{align*}
    |\delta(t')-\delta(\tau)| &\le 2c_f\sqrt{t'-\tau}\sigma + O(c_f\sigma) \\
    &\le \delta(t')/2 + O(c_f\sigma).
\end{align*}
Let $t'=t$, and we obtain $\delta(\tau) \le 2\delta(t) + O(c_f\sigma)$.

b) Assume the CoV 1 is optimal at $t$.
Suppose $\tau_1<t-2$, then CoV 2 is scheduled at $t-1$.
We have
\begin{align*}
    G_1(\tau_1)+\beta\sqrt{t-\tau_1}\sigma &\ge G_1(t)+4\sqrt{3}c_f\sigma \\
    &\ge G_2(t)+c_f\sqrt{2}\sigma+\beta\sigma,
\end{align*}
which contradicts the scheduling decision.
Thus, $\tau_1 \ge t-2$, and
\begin{align*}
    \delta_1(\tau_1) \le 2\sqrt{2}c_f\sigma + 5c_f\sigma = O(c_f\sigma).
\end{align*}

\section{Proof of Theorem \ref{thm:fixed}} \label{app:fixed}
We first bound the expectation of $R^*(T)$.
By (\ref{formula:gain-diff}), define the events 
\begin{align*}
    F_t &= \left\{ G^*(t) - H^*(t) \in (0,5c_f\sigma) \right\}, \\
    \Bar{F}_t &= \left\{ G^*(t) - H^*(t) = 0 \right\}.
\end{align*}
Since $G_i(t)$ has uniform stationary distribution, 
\begin{align*}
    P(F_t) = P\left(|G_1(t)-G_2(t)| \in (0,5c_f\sigma)\right) \le O(c_f\sigma),
\end{align*}
and we obtain
\begin{align*}
    \mathbb{E}[G^*(t)-H^*(t)] &= \mathbb{E}[G^*(t)-H^*(t)|E_t] \cdot P(E_t) + P(\Bar{E}_t) \\
    &\le \mathbb{E}[G^*(t)-H^*(t)|E_t, F_t] \cdot P(E_tF_t) \\
    &\quad + 0 \cdot P(E_t \Bar{F}_t) + P(\Bar{E}_t) \\
    &\le  O(c_f\sigma)^2 + O(\sigma^{2.4}) \le  O(c_f\sigma)^2.
\end{align*}
Next we bound the the expectation of $R_i(T)$. \\
i) For sub-optimal CoV $i$, define the event $\Gamma = \{\delta_i(t)=\delta, G_i(t)<G^*(t)\}$, and we have
\begin{align*}
    \mathbb{E}\left[\frac{\delta_i(t)}{t-\tau_i(t)} | \Gamma \right] &\le \mathbb{E}\left[\frac{\delta_i(t)}{t-\tau_i(t)} | \Gamma E_t \right] + 1\cdot P(\Bar{E}_t|\Gamma) \\
    &\le 
    \begin{cases}
    O\left((\beta\sigma)^2/\delta\right), &\quad \delta > \alpha c_f \sigma, \\
    O(c_f\sigma), &\quad \delta < \alpha c_f \sigma,
    \end{cases}
\end{align*}
where we use (\ref{formula:idle-subopt}), (\ref{formula:delta-subopt}) at the second inequality, and $\alpha$ is a large enough constant.
The distribution of $\delta_i(t)$ is bounded by
\begin{align*}
    &P(\delta_i(t) \le \delta | G_i(t)<G^*(t)=G^*) \\
    \le &P(G^*(t)-G_i(t)\le\delta+O(c_f\sigma)|G_i(t)<G^*)
    \le \frac{\delta+O(c_f\sigma)}{G^*}.
\end{align*}
Integrating over $G^*$,
\begin{align*}
    P(\delta_i(t) \le \delta | G_i(t)<G^*(t)) &= \int_0^1 \frac{\delta+O(c_f\sigma)}{G^*} f(G^*) dG^* \\
    &= (\delta+O(c_f\sigma)) \cdot \mathbb{E}[1/G^*] \\
    &\le \delta+O(c_f\sigma).
\end{align*}
Then,
\begin{align*}
    \mathbb{E}\left[\frac{\delta_i(t)}{t-\tau_i(t)} | G_i(t)<G^*(t) \right]  &\le O(c_f\sigma)^2 + \int_{\alpha c_f\sigma}^1 \frac{(\beta\sigma)^2}{\delta} d\delta \\
    &\le O(\sigma^2 \log^3(1/\sigma)).
\end{align*}
ii) For optimal CoV $i$, define $\Lambda=\left\{G_i(t)=G^*(t)\right\}$.
\begin{align*}
    \mathbb{E}\left[\frac{\delta_i(t)}{t-\tau_i(t)} | \Lambda \right] &\le \mathbb{E}\left[\frac{\delta_i(t)}{t-\tau_i(t)} | F_t,\Lambda \right] \cdot P(F_t) + 0 \cdot P(\Bar{F}_t) \\
    &\le O(\sigma^2 \log^2(1/\sigma)),
\end{align*}
where the second inequality is by (\ref{formula:delta-opt}). To sum up, 
\begin{align*}
    \Bar{R}_\mathcal{\text{MASS}} &= \frac{1}{T} \mathbb{E} \left[ R^*(T)+ \sum_{i=1}^2 R_i(T) \right] \\
    &\le \frac{1}{T} \sum_{t=3}^{T} \mathbb{E} \left[G^*(t)-H^*(t)+ \sum_{i=1}^2 \frac{\delta_i(t)}{t-\tau_i(t)}\right] + \frac{2}{T} \\
    &\le O(\sigma^2 \log^3(1/\sigma)).
\end{align*}

\section{Proof of Theorem \ref{thm:dynamic}} \label{app:dynamic}
Denote the arrival time of new candidates CoV $i$ by $s_1, s_2, \cdots, s_N$. We split the whole trip into periods $[1,s_1-1], [s_1,s_2-1], \cdots, [s_N,T]$.
At the beginning of the $i$-th period $s_i$, if $G_i(s_i)>G_j(s_i)$, we augment the period to $[\tau_j(s_i), s_{i+1}-1]$ so that $\tau_j(t)$ exists within the period for $t>s_i+1$.
Then by Lemma \ref{lem:properties}, CoV $j$ is sub-optimal during $[\tau_j(s_i), s_i-2]$, otherwise CoV $j$ should be scheduled before $s_i$.
Naturally, the scheduling decision is optimal at $[\tau_j(s_i)+1, s_i-2]$.
Therefore, we compute the expected learning regret inside the $i$-th period,
\begin{align}
    R_i \le \sum_{t=s_i+1}^{s_{i+1}-1} \mathbb{E} \left[G^*(t)-H^*(t)+ \sum_{i=1}^2 \frac{\delta_i(t)}{t-\tau_i(t)}\right] + 2. \label{formula:regret-period}
\end{align}
To sum up, we obtain the average expected learning rate using the results in the proof of Theorem \ref{thm:fixed},
\begin{align*}
    \Bar{R}_\mathcal{\text{MASS}} &= \frac{1}{T} \sum_{i=1}^N R_i \\
    &\le \frac{1}{T} \sum_{i=1}^{N} \sum_{t=s_i+1}^{s_{i+1}-1} \mathbb{E} \left[G^*(t)-H^*(t) + \sum_{i=1}^2 \frac{\delta_i(t)}{t-\tau_i(t)}\right] \\
    &\quad + \mathbb{E}\left[\frac{2N}{T}\right] \\
    &\le O\left(\sigma^2\log^3(1/\sigma)\right) + 2\lambda. 
\end{align*}
When $\lambda \le O\left(\sigma^2\log^3(1/\sigma)\right)$, $\Bar{R}_\mathcal{\text{MASS}} \le O\left(\sigma^2\log^3(1/\sigma)\right).$

There is a final note that the number of candidates could be less than two due to the departure of CoVs, when scheduling is trivial.
If $|\mathcal{V}_t| < 2$ at the beginning of a period, then there is no regret during the period.
Otherwise, if any departure happens inside the period, then there is no regret for the rest of the period.
In both cases, the regret is upper bounded by (\ref{formula:regret-period}).




\ifCLASSOPTIONcaptionsoff
  \newpage
\fi

\end{document}